\documentclass{article}

\usepackage{microtype}
\usepackage{graphicx}
\usepackage{subcaption}
\usepackage{booktabs} 
\usepackage{placeins} 
\usepackage{float} 

\usepackage{xurl}
\usepackage{hyperref}

\usepackage[accepted]{icml2026}

\usepackage{amsmath}
\usepackage{amssymb}
\usepackage{mathtools}
\usepackage{amsthm}

\newcommand{\Err}{\mathrm{Err}}

\usepackage{algorithm}
\usepackage{algorithmic}

\usepackage[capitalize,noabbrev]{cleveref}

\newcommand{\SCR}{\textsc{Scratch}}
\newcommand{\BB}{\textsc{BB}}
\newcommand{\WGT}{\textsc{Wgt}}
\newcommand{\CAT}{\textsc{Cat}}
\newcommand{\RES}{\textsc{Res}}

\theoremstyle{plain}
\newtheorem{theorem}{Theorem}[section]

\newtheorem{lemma}[theorem]{Lemma}
\newtheorem{corollary}[theorem]{Corollary}
\theoremstyle{definition}
\newtheorem{definition}[theorem]{Definition}
\newtheorem{assumption}[theorem]{Assumption}
\theoremstyle{remark}
\newtheorem{remark}[theorem]{Remark}

\icmltitlerunning{Black-Box Assisted Regression: Phase Transitions and Minimax Optimality}

\usepackage{enumitem}

\begin{document}

\twocolumn[
\icmltitle{Black-Box Assisted Regression: Phase Transitions and Minimax Optimality}


\begin{icmlauthorlist}
\icmlauthor{Yan Zhou}{csust}
\end{icmlauthorlist}

\icmlaffiliation{csust}{School of Mathematics and Statistics, Changsha University of Science and Technology}

\icmlcorrespondingauthor{Yan Zhou}{yanzhou@stu.csust.edu.cn}

\icmlkeywords{Machine Learning, ICML, Statistical Inference, Foundation Models}

\vskip 0.3in
]

\printAffiliationsAndNotice{}

\begin{abstract}
Foundation models are often used as fixed black-box predictors for downstream tasks with limited labeled data, but their predictions may be biased and unsafe to trust blindly. We study this setting through black-box assisted nonparametric regression: a learner observes labeled samples and can query a fixed predictor $f_0$, while the target $f^*$ is close to $f_0$ in $L_2(P_X)$ up to an unknown radius $\delta$.
We give a finite-sample minimax characterization showing a phase transition at $\delta_c(n)\asymp n^{-\beta/(2\beta+d)}$, with leading risk $\min\{\delta^2, n^{-2\beta/(2\beta+d)}\}$. We then analyze a Safe Residual Estimator: it learns a correction around $f_0$, initializes the residual head at zero so the initial predictor equals $f_0$, and uses holdout selection to revert to $f_0$ when the learned correction is not supported by validation data. Here, ``safe'' means avoiding negative transfer, i.e., performing worse than the black-box predictor alone.
The estimator matches the leading minimax term up to an additive validation-selection cost. Synthetic regression experiments verify the predicted phase transition, while CIFAR-100 with CLIP and AG News with Qwen3-8B provide practice-facing evidence that the same residual-correction tradeoff is useful beyond the formal squared-loss regression setting.
\end{abstract}

\section{Introduction}
\label{sec:intro}

Modern machine-learning pipelines often start from a strong pretrained system whose internal parameters or source data are unavailable to the downstream learner. The learner may only query a fixed black-box predictor $f_0$ and then use a small labeled dataset from the target task. This access model differs from standard fine-tuning, transfer learning, and distillation: we do not assume source data, parameter-space access, or the ability to modify the pretrained model. The statistical question is therefore not how to train the foundation model, but how to use its predictions safely when its error on the target distribution is unknown.

We formalize this problem as \emph{Black-Box Assisted Non-Parametric Regression}. The unknown target regression function $f^*$ is assumed to satisfy $\|f^*-f_0\|_{L_2(P_X)}\le \delta$ for an unknown prior error $\delta$. Small $\delta$ defines a prior-dominated regime in which the black-box predictor is already reliable; large $\delta$ defines a sample-dominated regime in which labeled data must drive the correction. The challenge is to adapt to these regimes without knowing $\delta$ and without suffering negative transfer, meaning a final predictor whose risk is worse than using $f_0$ alone.

Our contribution is a finite-sample learning-theoretic characterization of this tradeoff. First, we prove a minimax lower bound showing that the leading risk is governed by $\min\{\delta^2,n^{-2\beta/(2\beta+d)}\}$. Second, we analyze a deliberately simple \emph{Safe Residual Estimator} that learns a correction $\hat r$ and then selects between $f_0$ and $f_0+\hat r$ on held-out data. Third, we show that this estimator matches the leading minimax term up to an additive validation cost and satisfies an oracle-style no-negative-transfer guarantee. Finally, we use synthetic regression experiments to check the theorem-level predictions and vision/NLP experiments to probe the same qualitative mechanism in practical classification pipelines.

\paragraph{Conflict of Interest Disclosure.}
The author declares no financial conflicts of interest related to this work.

\section{Related Work}
\label{sec:related}

\textbf{Transfer, adaptation, and distillation.}
Classical transfer learning and domain adaptation often assume access to source data, source parameters, shared representations, or an adaptable pretrained model \cite{pan2010transfer,weiss2016transfer,cai2024transfernonparametric,lin2025robusttransfer,kuzborskij2013stability}. Distillation similarly uses teacher predictions, but typically studies compression or training a student to imitate a teacher. Our setting is narrower and more statistical: the learner only receives fixed black-box predictions $f_0(x)$ on target-task inputs and seeks minimax-optimal target-risk guarantees under an unknown prior error.

\textbf{Pseudo-labeling, self-training, and noisy labels.}
Pseudo-labeling and self-training are widely used to exploit weak or machine-generated labels \cite{lee2013pseudolabel,xie2020noisystudent,wang2021tent}, and several works provide theoretical guarantees under distribution or subpopulation shift \cite{pmlr-v139-cai21b,NEURIPS2024_1b99db17}. These results are complementary to ours. Our claim is not that self-training lacks theory, but that prior work does not characterize the minimax phase transition $\min\{\delta^2,n^{-2\beta/(2\beta+d)}\}$ for fixed black-box assisted regression. Noisy-label correction is also distinct: our target labels are standard noisy regression observations, not corrupted or mislabeled; the imperfect object is the fixed predictor $f_0$.

\textbf{Prediction-powered inference and model selection.}
Prediction-powered inference (PPI) uses black-box predictions to construct valid confidence intervals \cite{angelopoulos2023prediction,zrnic2024crossppi}, whereas our goal is risk-optimal function estimation. Our holdout step is closer to two-model validation/model selection: it selects between $f_0$ and $f_0+\hat r$, enabling an oracle inequality that formalizes no negative transfer.

\textbf{Summary.}
We formalize the ``foundation model as prior'' problem via non-parametric minimax theory, identifying the critical phase transition and proposing a safe residual estimator. A comprehensive discussion of additional literature, including neighboring transfer/adaptation methods \cite{gu2023fewshotcontinual,lee2025tripartite}, kernel methods, and PPI, is provided in Appendix~\ref{app:related_full}.

\section{Problem Setup}
\label{sec:setup}

We consider non-parametric regression with random design (Assumption~\ref{ass:design}) for the theoretical analysis. Random design means that the covariates are sampled i.i.d.\ from the target marginal distribution $P_X$; Assumption~\ref{ass:design} specifies the bounded-domain and density conditions used in the proofs. Let $\mathcal{X} = [0, 1]^d$ be the normalized input space. We assume the true regression function $f^*: \mathcal{X} \to \mathbb{R}$ belongs to a function class $\mathcal{F}$ (e.g., H\"older class or Sobolev class). We are given a dataset $\mathcal{D}_n = \{(x_i, y_i)\}_{i=1}^n$, where $y_i = f^*(x_i) + \epsilon_i$ and $\epsilon_i \sim \mathcal{N}(0, \sigma^2)$ are independent Gaussian noise.

We have access to a \emph{black-box predictor} $f_0: \mathcal{X} \to \mathbb{R}$ derived from a pre-trained model, fixed during inference. Assuming $f_0$ approximates $f^*$ with a global error bound $\|f_0 - f^*\|_{L_2(P_X)} \le \delta$ (unknown $\delta \ge 0$), we construct an estimator $\hat{f}$ minimizing the MSE risk:
\begin{equation}
    \mathcal{R}(\hat{f}, f^*) = \mathbb{E} \left[ \|\hat{f} - f^*\|_{L_2(P_X)}^2 \right].
\end{equation}

\begin{definition}[H\"older Class]
\label{def:holder}
For $\beta > 0$ and $L > 0$, the H\"older class $\mathcal{H}(\beta, L)$ consists of functions $f: [0,1]^d \to \mathbb{R}$ whose derivatives up to order $\lfloor \beta \rfloor$ exist and are bounded, with the highest-order derivatives being $(\beta - \lfloor \beta \rfloor)$-H\"older continuous with constant $L$. (See Appendix~\ref{app:notation} for the precise definition.)
\end{definition}

This class specifies the regularity of the residual $r^*$ in the minimax analysis of Section~\ref{sec:theory}. Importantly, we do not assume that the black-box predictor $f_0$ itself belongs to a H\"older or Sobolev class. The regularity condition is placed on the residual $r^*=f^*-f_0$, together with the radius constraint $\|r^*\|_{L_2(P_X)}\le \delta$. Thus $f_0$ may come from a much more general source, including a foundation model, as long as the target-task correction around it is regular.

\section{Method}
\label{sec:method}

The estimator is intentionally simple. Its role is not to compete with all possible transfer-learning heuristics, but to provide a clean statistical object for which the minimax tradeoff and no-negative-transfer property can be proved. The procedure has two candidates: the black-box predictor $f_0$ and a residual-corrected predictor $f_0+\hat r$. It then uses held-out data to select between them.

We call the procedure safe because the selection step is designed to avoid negative transfer: if the learned residual correction does not improve held-out loss relative to $f_0$, the final predictor reverts to $f_0$. The binary selection is deliberate. Estimating a continuous mixing weight from scarce validation data can be unstable, whereas the two-candidate selection admits the oracle inequality in Theorem~\ref{thm:oracle}.

\subsection{The Safe Residual Estimator}
The core intuition behind our method is to decompose the complexity of the learning task. We posit that the true function can be expressed as the sum of the black-box prior and a residual component:
\begin{equation}
    f^*(x) = f_0(x) + r^*(x),
\end{equation}
where $r^*(x) = f^*(x) - f_0(x)$ is the target-task residual error of the black-box predictor.

To achieve adaptivity to the unknown quality of the black-box prior and mitigate negative transfer, we employ a \emph{Select-and-Estimate} strategy. We split the available labeled data into a training set $\mathcal{D}_{\mathrm{tr}}$ and a validation set $\mathcal{D}_{\mathrm{val}}$. We train a residual model $\hat{r}$ on $\mathcal{D}_{\mathrm{tr}}$ and then decide whether to use it based on performance on $\mathcal{D}_{\mathrm{val}}$.

The residual-corrected candidate is $\hat f_{\mathrm{res}}=f_0+\hat r$. The final Safe Residual Estimator takes the form:
\begin{equation}
    \hat{f}_{\mathrm{safe}}(x) = f_0(x) + \hat{\alpha} \cdot \hat{r}(x),
\end{equation}
where $\hat{\alpha} \in \{0, 1\}$ is a selection variable. If $\hat{\alpha}=0$, this triggers a reversion to the black-box; if $\hat{\alpha}=1$, we correct it. This mechanism is crucial for the ``black-box dominated'' regime where learning the residual might be noisier than the bias itself. The detailed procedure, including the splitting and selection logic, is formally described in Algorithm \ref{alg:bb_residual}.

\begin{algorithm}[H]
   \caption{Safe Residual Estimator}
   \label{alg:bb_residual}
\begin{algorithmic}
   \STATE {\bfseries Input:} Data $\mathcal{D}_{\mathrm{tr}}, \mathcal{D}_{\mathrm{val}}$, Prior $f_0$.
   \STATE {\bfseries 1. Residual Learning:}
   \STATE \quad Obtain $\hat{r}$ by minimizing MSE of $(y - f_0(x))$ on $\mathcal{D}_{\mathrm{tr}}$.
   \STATE {\bfseries 2. Risk Assessment:}
   \STATE \quad $\mathcal{L}_{\mathrm{BB}} \leftarrow \sum_{(x,y) \in \mathcal{D}_{\mathrm{val}}} (y - f_0(x))^2$
   \STATE \quad $\mathcal{L}_{\mathrm{Res}} \leftarrow \sum_{(x,y) \in \mathcal{D}_{\mathrm{val}}} (y - f_0(x) - \hat{r}(x))^2$
   \STATE {\bfseries 3. Safe Selection:}
   \STATE \quad $\hat{\alpha} \leftarrow \mathbb{I}\left[ \mathcal{L}_{\mathrm{Res}} < \mathcal{L}_{\mathrm{BB}} \right]$
   \STATE {\bfseries Output:} $\hat{f}_{\mathrm{safe}}(x) = f_0(x) + \hat{\alpha} \cdot \hat{r}(x)$
\end{algorithmic}
\end{algorithm}

The algorithm selects between $f_0$ and $\hat f_{\mathrm{res}}=f_0+\hat r$; it is not selecting between zero-shot and scratch. Therefore the selected residual candidate can outperform both the black-box predictor and a scratch estimator when the correction $\hat r$ is estimable.

For neural residual heads used in the classification experiments, we use zero-initialization: the residual head is initialized so that $\hat r\equiv 0$ at the start of training. Thus the initial predictor equals $f_0$, and the model departs from the black-box prior only when the labeled data provide a training signal. Detailed hyperparameters and architectures are provided in Appendix~\ref{app:impl}.

\subsection{Why Residualization Rather Than Linear Ensembling?}
\label{sec:limit_weighted}

Linear or validation-tuned ensembling is a natural model-selection baseline, and related forms of ensembling and weight-space adaptation are common in transfer and few-shot settings \cite{gu2023fewshotcontinual,lee2025tripartite}. In our fixed function-access setting, however, a convex combination of $f_0$ and a scratch estimator can only move within their linear span.

A common alternative to residual learning is the weighted ensemble, $\hat{f}_{\alpha}(x) = \alpha f_0(x) + (1-\alpha) \hat{f}_{scratch}(x)$, where $\alpha \in [0,1]$. While effective for variance reduction, we highlight a fundamental geometric limitation of this approach in correcting residual errors that are not aligned with the scratch estimator.

Consider the Hilbert space $L_2(P_X)$. The weighted ensemble restricts the estimator to the affine line segment connecting the prior $f_0$ and the data-driven estimator $\hat{f}_{scratch}$. Let the true residual be $r^* = f^* - f_0$. The approximation error is minimized when $(1-\alpha)(\hat{f}_{scratch} - f_0) \approx r^*$. However, $\hat{f}_{scratch}$ is trained to approximate $f^*$, not to align with the bias direction $r^*$.

Formally, let $\mathcal{S} = \text{span}\{f_0, \hat{f}_{scratch}\}$ be the subspace spanned by the estimators. If the residual error $r^*$ contains a component in the orthogonal complement $\mathcal{S}^\perp$ (e.g., $r^*$ contains high-frequency corrections missing from both $f_0$ and the smooth approximation $\hat{f}_{scratch}$), the weighted ensemble is fundamentally insensitive to it:
\begin{equation}
    \inf_{\alpha \in \mathbb{R}} \| \hat{f}_{\alpha} - f^* \|^2 \ge \| \mathcal{P}_{\mathcal{S}^\perp}(r^*) \|^2,
\end{equation}
where $\mathcal{P}_{\mathcal{S}^\perp}$ is the orthogonal projection operator.

\begin{figure*}[t]
\centering
\includegraphics[width=0.95\textwidth]{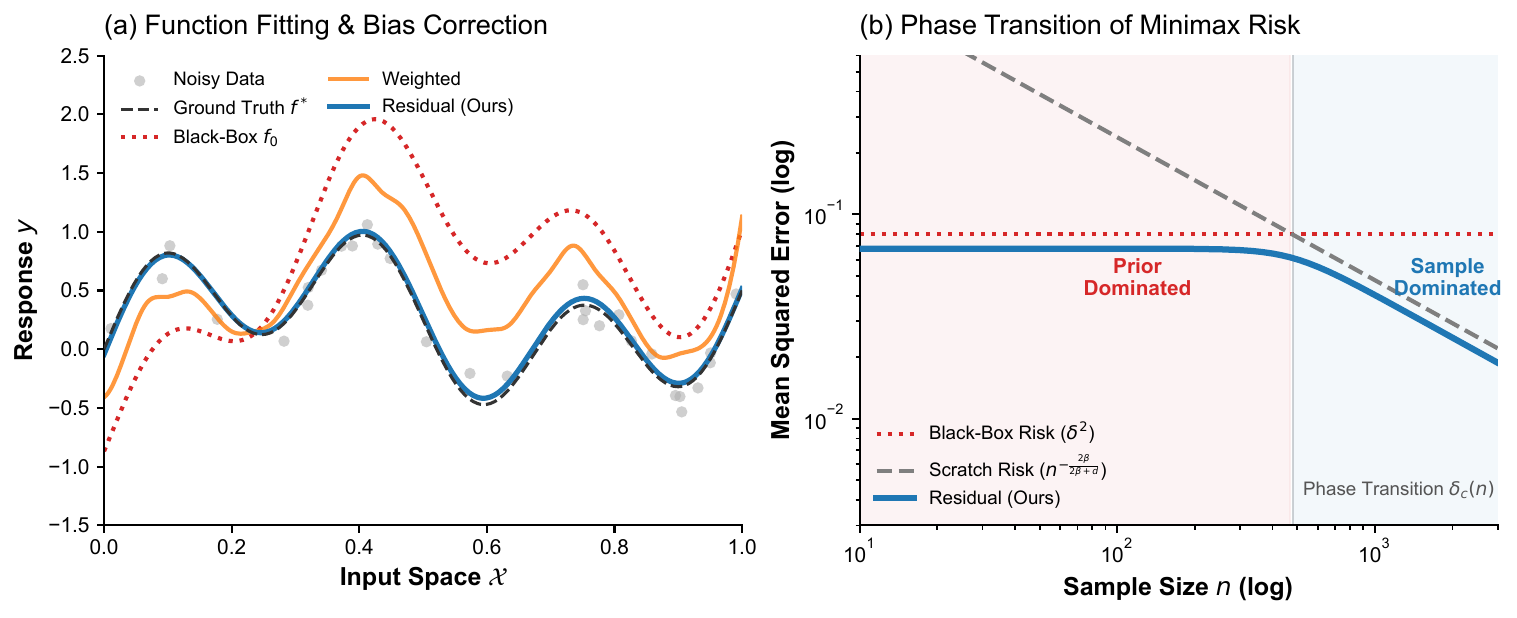}
\caption{\textbf{Theory in Action: The Phase Transition.}
\textbf{(a) Mechanism:} Unlike \textsc{Weighted} ensembles (orange) that are constrained to a linear subspace spanned by the prior and data, our \textsc{Residual} estimator (blue) performs a non-linear correction, effectively capturing the complex residual $r^*(x)$.
\textbf{(b) Risk Profile (Key Theoretical Result):} We identify a critical threshold $\delta_c(n)$.
\textbf{Left of dashed line (Prior-Dominated):} The black-box is accurate ($\delta$ is small); our estimator adapts and achieves the fast rate $\delta^2$.
\textbf{Right of dashed line (Sample-Dominated):} The black-box is biased; our estimator safely recovers the optimal non-parametric rate $n^{-\alpha}$, avoiding the high error floor of the black-box.
Note how \textsc{Safe Residual} (blue curve) tracks the lower envelope by selecting between $f_0$ and $f_0+\hat r$, not between zero-shot and scratch.}
\label{fig:synthetic_vis}
\end{figure*}

\begin{remark}[Geometric Interpretation: Subspace vs. Ball]
The fundamental limitation of the weighted ensemble lies in its geometry. The estimator $\hat{f}_{\alpha}$ is constrained to search within a one-dimensional affine subspace connecting $f_0$ and $\hat{f}_{scratch}$. In contrast, our residual estimator searches within a ball centered at $f_0$ in the function space $\mathcal{F}$.
From a manifold perspective, if the bias $r^*$ lies in the null space of the projection operator onto the line spanned by $\{f_0, \hat{f}_{scratch}\}$, the weighted ensemble is blind to it. Our method, by explicitly modeling the residual map, effectively performs a \emph{local re-centering} of the hypothesis class, allowing it to recover high-frequency corrections that are orthogonal to the base predictors.
\end{remark}

\subsection{Structural Advantage and Risk Decomposition}
\label{sec:risk_decomp}

To understand why the \textsc{Residual} estimator provides a fundamental advantage, we analyze the decomposition of the expected risk. Let $\mathcal{R}(\hat{f}) = \mathbb{E}[\|\hat{f} - f^*\|^2]$ be the $L_2$ risk. For a standard estimator $\hat{f}_{scratch}$ trained on $\mathcal{D}_n$, the risk is governed by the complexity of the entire function class $\mathcal{F}$.

In our framework, we write $\hat{f}(x) = f_0(x) + \hat{r}(x)$. Since $f_0$ is a fixed, deterministic black-box, the risk of our estimator becomes:
\begin{equation}
\label{eq:risk_equivalence}
\begin{aligned}
\mathcal{R}(\hat{f}) 
&= \mathbb{E}_{X, \epsilon} \left\| (f_0 + \hat{r}) - (f_0 + r^*) \right\|_{L_2}^2 \\
&= \mathbb{E} \left\| \hat{r} - r^* \right\|_{L_2}^2.
\end{aligned}
\end{equation}
This identity reveals a crucial insight: the statistical difficulty of the task is shifted from estimating $f^*$ to estimating the residual $r^*$. From a bias-variance perspective, if we use a kernel-based learner with regularization parameter $\lambda$, the risk of $\hat{r}$ can be bounded by:
\begin{equation}
    \mathcal{R}(\hat{r}) \le \underbrace{\| (I - \mathcal{K}_{\lambda}) r^* \|^2}_{\text{Approximation Bias}} + \underbrace{\frac{\sigma^2}{n} \text{tr}(\mathcal{K}_{\lambda}^2)}_{\text{Estimation Variance}},
\end{equation}
where $\mathcal{K}_{\lambda}$ is the smoothing operator. While the variance term remains $O(n^{-1} \cdot \text{effective dim})$, the bias term now depends only on the energy of $r^*$. Under Assumption \ref{ass:holder}, $\|r^*\| \le \delta$. When $\delta$ is small, the approximation bias is significantly lower than that of $\hat{f}_{scratch}$, which must account for the full magnitude $\|f^*\|$. This structural prior effectively ``centers'' the learning process around $f_0$, allowing the model to allocate its limited sample budget entirely to correcting the prior's deficiencies.

\section{Theoretical Analysis}
\label{sec:theory}

We give finite-sample minimax bounds for black-box assisted regression. The results characterize the leading minimax term and the additional cost introduced by holdout selection, rather than claiming exact equality between the lower and upper bounds. Theorem~\ref{thm:lower_bound} gives the minimax lower bound and identifies the phase transition. Theorem~\ref{thm:upper_bound} analyzes a validation-selected residual estimator and shows that it adapts to the unknown prior quality. Theorem~\ref{thm:oracle} is an oracle inequality for the concrete holdout selection step in Algorithm~\ref{alg:bb_residual}, showing that the selected predictor performs almost as well as the better of $f_0$ and $f_0+\hat r$.

\subsection{Assumptions and Definitions}

We consider the non-parametric regression problem $y_i = f^*(x_i) + \epsilon_i$ with $\epsilon_i \sim \mathcal{N}(0, \sigma^2)$. To rigorously relate the empirical risk to the population $L_2$ risk, we introduce a standard design assumption.

\begin{assumption}[Random Design]
\label{ass:design}
The covariates $\{x_i\}_{i=1}^n$ are i.i.d.\ samples from a distribution $P_X$ on $\mathcal{X}=[0,1]^d$ with Lebesgue density $p(x)$ satisfying $c \le p(x) \le C$ for constants $0 < c \le C < \infty$.
\end{assumption}

Next, we define the structure of the true function $f^*$ relative to the black-box $f_0$. Instead of assuming $f^*$ lies in a generic ball, we define the function class explicitly in terms of the \emph{residual structure}.

\begin{assumption}[Residual Regularity and Boundedness]
\label{ass:holder}
Let $f_0$ be a fixed black-box predictor. The true function satisfies $f^*=f_0+r^*$ where
$r^*\in\mathcal{H}(\beta,L)$ and $\|r^*\|_{L_2(P_X)}\le \delta$.
Moreover, assume $\|r^*\|_{\infty}\le B$ and $\|f_0\|_{\infty}\le B_0$.
\end{assumption}

This definition implies that the residual $r^*=f^*-f_0$ is smooth and has bounded $L_2(P_X)$ energy (controlled by $\delta$).

\begin{definition}[Black-box Assisted Class]
\label{def:Fdelta}
Define
\begin{equation}
\label{eq:Fdelta}
\begin{aligned}
\mathcal{F}(\delta) 
&:= \{ f_0 + r : r\in \mathcal{H}(\beta,L), \\
&\qquad \|r\|_{L_2(P_X)}\le \delta,\ \|r\|_\infty \le B\}.
\end{aligned}
\end{equation}
\end{definition}

\begin{table*}[t]
\caption{\textbf{1D Synthetic Regression (MSE)}. Comparison of risk under varying sample size $n$ and prior error $\delta$. \textsc{Res(safe)} denotes the selected predictor that chooses between $f_0$ and $f_0+\hat r$, not between \textsc{BB} and \textsc{Scratch}. \textbf{Key Observation:} When the black-box is perfect ($\delta=0.0$), \textsc{Residual(raw)} suffers from negative transfer (0.0335), but \textsc{Residual(safe)} effectively falls back (Fallback=65\%), reducing risk to 0.0027. When bias is large ($\delta=0.8$), our method significantly outperforms \textsc{Weighted} (0.097 vs 0.145) by modeling the non-linear residual. In the extreme bias regime ($\delta=1.5$), \textsc{Residual} (0.146) dramatically outperforms \textsc{Scratch} (0.189).}
\label{tab:toy_1d}
\begin{center}
\begin{small}
\begin{sc}
\renewcommand{\arraystretch}{1.1}
\setlength{\tabcolsep}{10pt}
\begin{tabular}{lccccccc}
\toprule
Setting & \BB & \WGT & \CAT & \textbf{Res(raw)} & \textbf{Res(safe)} & \textbf{Fallback} & \textbf{Scratch} \\
\midrule
$n=20, \delta=0.0$   & \textbf{0.0000} & 0.0007 & 0.0501 & 0.0335 & 0.0027 & 65\% & 0.1894 \\
$n=20, \delta=0.1$   & 0.0100 & 0.0096 & 0.0589 & 0.0368 & \textbf{0.0118} & 60\% & 0.1894 \\
$n=20, \delta=0.8$   & 0.6399 & 0.1452 & 0.1448 & \textbf{0.0857} & 0.0971 & 5\% & 0.1894 \\
$n=20, \delta=1.5$   & 2.2496 & 0.1700 & 0.1599 & \textbf{0.1461} & \textbf{0.1461} & 0\% & 0.1894 \\
$n=50, \delta=0.0$   & \textbf{0.0000} & 0.0026 & 0.0107 & 0.0108 & 0.0006 & 65\% & 0.0325 \\
$n=50, \delta=0.8$   & 0.6399 & 0.0311 & 0.0712 & \textbf{0.0130} & \textbf{0.0130} & 0\% & 0.0325 \\
\bottomrule
\end{tabular}
\end{sc}
\end{small}
\end{center}
\end{table*}

\begin{table*}[t]
\caption{\textbf{High-Dimensional Regression ($d=20, n=100$)}. In high dimensions with limited samples, learning the residual is difficult. \textsc{Scratch} suffers from the curse of dimensionality, with risk staying high (0.489) even at $n=100$. \textsc{Res(safe)} selects between $f_0$ and $f_0+\hat r$. When the prior is poor ($\delta=1.0$), \textsc{Residual(raw)} explodes (1.026) due to the curse of dimensionality, performing worse than the black-box (0.994). Our \textsc{Safe Residual} detects this failure, triggering fallback (60\%) and stabilizing the risk at 1.007.}
\label{tab:toy_hd}
\begin{center}
\begin{small}
\begin{sc}
\renewcommand{\arraystretch}{1.1}
\setlength{\tabcolsep}{10pt}
\begin{tabular}{lccccccc}
\toprule
$\delta$ & \BB & \WGT & \CAT & \textbf{Res(raw)} & \textbf{Res(safe)} & \textbf{Fallback} & \textbf{Scratch} \\
\midrule
0.0 & \textbf{0.000} & 0.006 & 0.034 & 0.006 & \textbf{0.002} & 40\% & 0.489 \\
0.2 & \textbf{0.040} & 0.047 & 0.074 & 0.048 & \textbf{0.040} & 70\% & 0.489 \\
0.5 & 0.248 & 0.253 & 0.280 & 0.263 & 0.250 & 60\% & 0.489 \\
1.0 & 0.994 & \textbf{0.485} & 0.509 & 1.026 & 1.007 & 60\% & 0.489 \\
\bottomrule
\end{tabular}
\end{sc}
\end{small}
\end{center}
\end{table*}

\subsection{Minimax Lower Bounds}

We first characterize the fundamental limit of this learning problem. The following theorem establishes an information-theoretic lower bound for any estimator.

\begin{theorem}[Minimax Lower Bound]
\label{thm:lower_bound}
Under Assumptions \ref{ass:design} and \ref{ass:holder}, there exists a constant $c > 0$ depending only on $\beta, L, d, \sigma$ and the density bounds in Assumption~\ref{ass:design} such that:
\begin{equation}
\begin{aligned}
    \inf_{\hat{f}} \sup_{f^* \in \mathcal{F}(\delta)} & \mathbb{E}\|\hat{f} - f^*\|_{L_2(P_X)}^2 \\
    & \geq c \cdot \min \left( \delta^2, n^{-\frac{2\beta}{2\beta + d}} \right).
\end{aligned}
\end{equation}
\end{theorem}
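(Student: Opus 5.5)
Since $f_0$ is fixed and known, the problem is exactly equivalent to estimating $r^*=f^*-f_0$ from observations $\tilde y_i = y_i - f_0(x_i) = r^*(x_i)+\epsilon_i$ over the class $\{r\in\mathcal{H}(\beta,L):\|r\|_{L_2(P_X)}\le\delta,\ \|r\|_\infty\le B\}$. Any estimator $\hat f$ yields $\hat r=\hat f-f_0$ with the same risk, as in \eqref{eq:risk_equivalence}. So it suffices to prove the lower bound for this constrained H\"older ball. I will use Fano's method (via Varshamov--Gilbert) with a standard bump construction, tuned to respect the $L_2$ radius $\delta$. I will split into the two regimes $\delta\ge \delta_c(n):=c_0 n^{-\beta/(2\beta+d)}$ and $\delta<\delta_c(n)$.

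\textbf{Construction.} Fix a smooth bump $K$ supported in $[0,1/2]^d$ with $K\in\mathcal{H}(\beta,1)$. Take a bandwidth $h$, grid points $z_j$, $j=1,\dots,m\asymp h^{-d}$, and set $\phi_j(x)=a\,h^{\beta}K((x-z_j)/h)$ with $a\le L$. Then for $\omega\in\{0,1\}^m$ the functions $r_\omega=\sum_j\omega_j\phi_j$ lie in $\mathcal{H}(\beta,L)$ and satisfy $\|r_\omega\|_\infty\lesssim h^\beta\le B$ for $h$ small. Using Assumption~\ref{ass:design}, $\|\phi_j\|_{L_2(P_X)}^2\asymp h^{2\beta+d}$, so $\|r_\omega\|_{L_2}^2\le C' m h^{2\beta+d}\asymp h^{2\beta}$. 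By Varshamov--Gilbert, choose $M\ge 2^{m/8}$ vertices with pairwise Hamming distance $\ge m/8$, giving separation $\|r_\omega-r_{\omega'}\|^2\gtrsim h^{2\beta}$. The KL divergence between the $n$-sample laws is $\frac{n}{2\sigma^2}\|r_\omega-r_{\omega'}\|_{L_2(P_X)}^2\lesssim n h^{2\beta}$, since the design law is common.

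\textbf{Regime $\delta\ge\delta_c$.} Take $h\asymp n^{-1/(2\beta+d)}$ with a small constant. Then $nh^{2\beta}\lesssim c\, m\lesssim c\log M$, so Fano gives risk $\gtrsim h^{2\beta}\asymp n^{-2\beta/(2\beta+d)}$. The $L_2$ constraint $\|r_\omega\|^2\lesssim h^{2\beta}\le\delta^2$ holds once $c_0$ is chosen large relative to the construction constants.

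\textbf{Regime $\delta<\delta_c$.} Here I keep $h\asymp n^{-1/(2\beta+d)}$ but shrink the amplitude: $\phi_j = \gamma h^\beta K(\cdot)$ with $\gamma=\delta/(C''h^\beta)\le 1$. Then $\|r_\omega\|\le\delta$, smoothness still holds since $\gamma\le1$, the separation is $\gtrsim\delta^2$, and the KL is $\lesssim n\gamma^2h^{2\beta}\le n h^{2\beta}\lesssim\log M$. Fano then yields risk $\gtrsim\delta^2$. (For tiny $\delta$ one can alternatively use the two-point test $0$ vs $\delta\phi$, but the scaled hypercube covers all $\delta$ uniformly.) Combining the two regimes gives $c\min(\delta^2,n^{-2\beta/(2\beta+d)})$. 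The case $\delta=0$ is trivial.

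\textbf{Main obstacle.} The delicate step is the bookkeeping of constants, so that simultaneously the H\"older constant $L$, the sup bound $B$, the $L_2(P_X)$ radius (which uses the density bounds $c\le p\le C$), and the Fano condition $\mathrm{KL}\le \alpha\log M$ hold with $c$ depending only on $(\beta,L,d,\sigma,c,C)$. This requires $n$ large enough that $m\ge 8$. Small $n$ can be handled by the two-point Le Cam bound, which gives $\gtrsim\min(\delta^2,\sigma^2/n)$ and is comparable in that range.
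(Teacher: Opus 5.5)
Your proposal is correct and is essentially the paper's proof. It uses the same reduction to the residual class, a disjoint-bump hypercube with Varshamov--Gilbert separation, the random-design KL identity and Fano, with the bump amplitude capped at order $\delta$ in the prior-dominated regime; your fixed choice $m\asymp n^{1/(2\beta+d)}$ there is one admissible point of the paper's range $(n\delta^2)^{1/d}\lesssim m\lesssim (L/\delta)^{1/\beta}$.

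Your Le Cam patch for small $n$ fills a detail the paper skips. Note also that, as with the paper's $\kappa$, your constant must depend on $B$, because the minimax risk is trivially at most $B^2$; the statement's list of dependencies is therefore slightly too optimistic.
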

\textit{Proof sketch in Appendix \ref{app:geometry}; full proof in Appendix \ref{app:proofs:lower}.}

This theorem rigorously establishes the \textbf{Phase Transition}. The term $\delta^2$ represents the irreducible bias floor if one relies solely on the black-box, while $n^{-\frac{2\beta}{2\beta + d}}$ represents the optimal rate for learning the residual from data. The difficulty is governed by the minimum of these two quantities. See Appendix \ref{app:geometry} for geometric intuition regarding the packing argument.

The critical threshold for this phase transition is given by:
\begin{equation}
    \delta_c(n) \asymp n^{-\frac{\beta}{2\beta + d}}.
\end{equation}
When $\delta \ll \delta_c(n)$, we are in the black-box dominated regime; otherwise, we enter the sample dominated regime.

\subsection{Upper Bounds and Adaptivity}

Can we achieve this lower bound? A naive estimator might fail to adapt to the unknown $\delta$. We analyze a theoretically grounded \emph{Select-and-Estimate} procedure matching the Safe Residual Estimator: it uses sample splitting to choose between the black-box and the residual-corrected estimator.

The following bounds are non-asymptotic: they hold for every finite $n$, every $\delta\ge 0$, and every fixed black-box predictor $f_0$. In particular, $\delta$ is a property of the black-box prior and does not vary with $n$. The phase transition at $\delta_c(n)\asymp n^{-\beta/(2\beta+d)}$ should therefore be read as a finite-sample tradeoff between prior error and the difficulty of estimating the residual from labeled data.

\begin{theorem}[Upper Bound]
\label{thm:upper_bound}
There exists an estimator $\hat{f}_{\mathrm{safe}}$ (constructed via sample splitting, a minimax-optimal nonparametric residual regressor on $\mathcal{D}_{\mathrm{tr}}$,
and safe holdout selection on $\mathcal{D}_{\mathrm{val}}$) such that for all $\delta\ge 0$,
\begin{equation}
\begin{aligned}
    \sup_{f^* \in \mathcal{F}(\delta)} & \mathbb{E}\|\hat{f}_{\mathrm{safe}} - f^*\|_{L_2(P_X)}^2 \\
    & \le C \cdot \min \left( \delta^2, n^{-\frac{2\beta}{2\beta + d}} \right) + \frac{C'}{n},
\end{aligned}
\end{equation}
where $C$ and $C'$ depend only on $(\beta,L,d,\sigma,B,B_0,c,C)$ and not on $n$, $\delta$, or $f_0$.
\end{theorem}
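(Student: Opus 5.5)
We prove the bound for the concrete construction of Algorithm~\ref{alg:bb_residual}, splitting $\mathcal{D}_n$ into two halves $\mathcal{D}_{\mathrm{tr}}$ and $\mathcal{D}_{\mathrm{val}}$ of size $m=\lfloor n/2\rfloor$. On $\mathcal{D}_{\mathrm{tr}}$ we regress the pseudo-responses $\tilde y_i=y_i-f_0(x_i)=r^*(x_i)+\epsilon_i$ with a standard minimax-optimal H\"older regressor, for instance a local polynomial or piecewise-polynomial least-squares estimator with bandwidth $h\asymp m^{-1/(2\beta+d)}$, truncated to $[-B,B]$. Since $f_0$ is fixed and known, this is an ordinary nonparametric regression problem for $r^*\in\mathcal{H}(\beta,L)$ with $\|r^*\|_\infty\le B$. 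By Assumption~\ref{ass:design} and the classical rate, $\mathbb{E}\|\hat r-r^*\|_{L_2(P_X)}^2\le C_1 n^{-2\beta/(2\beta+d)}$, with $C_1$ independent of $f_0$ and $\delta$. This uses the identity \eqref{eq:risk_equivalence}. Truncation also gives $\|\hat r\|_\infty\le B$, which the selection step needs. The two candidates then have risks $R_0=\|r^*\|^2\le\delta^2$ (deterministic) and $R_1=\mathbb{E}\|\hat r-r^*\|^2\le C_1n^{-2\beta/(2\beta+d)}$.

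The main step is an oracle inequality for the holdout selection, conditional on $\mathcal{D}_{\mathrm{tr}}$. The aim is
\[
\mathbb{E}\bigl[\|\hat f_{\mathrm{safe}}-f^*\|^2\,\big|\,\mathcal{D}_{\mathrm{tr}}\bigr]\le C_2\min\{\|r^*\|^2,\|\hat r-r^*\|^2\}+\frac{C_3}{n}.
\]
Write $g=\hat r$, which is fixed given $\mathcal{D}_{\mathrm{tr}}$. On each validation point,
\[
(y-f_0-g)^2-(y-f_0)^2=(g-r^*)^2-(r^*)^2-2\epsilon g .
\]
The validation criterion is therefore an empirical mean of bounded i.i.d.\ variables $Z_j=(g-r^*)^2(x_j)-(r^*)^2(x_j)$, plus a Gaussian term $-2\epsilon_j g(x_j)$. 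The expectation of $Z_j$ is $\Delta:=\|g-r^*\|^2-\|r^*\|^2$. The variance of $Z_j-2\epsilon_jg(x_j)$ is at most $K(\|g-r^*\|^2+\|r^*\|^2)$, because $|Z_j|\le 4B\,|g|\cdot\ldots$ and every term is controlled by $\sup$-norms times squared $L_2$ quantities; here $K$ depends on $B,\sigma$. Suppose the wrong candidate is chosen, say $\hat\alpha=1$ while $\|g-r^*\|^2>\|r^*\|^2$. The excess loss is then $\Delta$, and this happens only if the empirical mean deviates from its expectation by at least $\Delta$. Bernstein's inequality (for the sub-exponential Gaussian part as well) bounds the probability of that event by $\exp(-c\,m\Delta^2/(K\Delta+K\min\{\cdot\}+\Delta\cdot B^2))$. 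Once $\Delta$ dominates the smaller risk, the exponent is of order $m\Delta$. Integrating, $\mathbb{E}[\Delta\,\mathbf 1\{\text{wrong}\}]\lesssim \min\{\|r^*\|^2,\|g-r^*\|^2\}+1/m$. This uses the elementary fact that $\sup_{\Delta}\Delta e^{-cm\Delta}\lesssim 1/m$, and that when $\Delta$ is comparable to the smaller risk we pay at most a constant times it. The symmetric case $\hat\alpha=0$ while $\|r^*\|^2>\|g-r^*\|^2$ is handled in exactly the same way.

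Taking expectation over $\mathcal{D}_{\mathrm{tr}}$ and using $\mathbb{E}\min\{a,X\}\le\min\{a,\mathbb{E}X\}$ gives
\[
\mathbb{E}\|\hat f_{\mathrm{safe}}-f^*\|^2\le C_2\min\{\delta^2,C_1n^{-2\beta/(2\beta+d)}\}+\frac{C_3}{n},
\]
which is the claim, and uniform over $\mathcal{F}(\delta)$. The main obstacle is the variance-adaptive (Bernstein-type) control in the second step. A naive Hoeffding bound would only yield an additive $n^{-1/2}$ cost, whereas the $C'/n$ term requires that the fluctuations scale with the candidates' risks, together with careful bookkeeping of the Gaussian cross term $\epsilon g$. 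I would handle that term by conditioning on the $x_j$, which makes it exactly $\mathcal{N}(0,4\sigma^2\sum_j g(x_j)^2)$, and then bounding $\sum_j g(x_j)^2$ by $2\sum_j (g-r^*)^2(x_j)+2\sum_j (r^*)^2(x_j)$, which is again controlled in terms of the two risks.
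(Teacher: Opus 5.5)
Your proof is correct. Its outer skeleton matches the paper's: a sample split, the candidates $f_0$ and $f_0+\hat r$ with a truncated minimax-optimal residual regressor, and then $\mathbb{E}\min\{a,X\}\le\min\{a,\mathbb{E}X\}$. The genuine difference is the selection step. The paper appeals to its two-model holdout lemma. That lemma bounds $L(\hat f_{\mathrm{sel}})-\min_k L(\hat f_k)$ by $2\max_k|\hat L(\hat f_k)-L(\hat f_k)|$ and controls each candidate's empirical loss separately by a sub-exponential Bernstein bound; the paper then fixes $\delta'=1/4$ and ``takes expectations''. This is modular, since the same lemma feeds Theorem~\ref{thm:oracle}. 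However, each $\hat L(\hat f_k)$ contains $n_{\mathrm{val}}^{-1}\sum_i\epsilon_i^2$, which fluctuates at order $\sigma^2 n_{\mathrm{val}}^{-1/2}$. With the stated tail $2\exp(-c\,n_{\mathrm{val}}\min\{t^2,t\})$, the admissible threshold is $t\asymp\sqrt{\log(2/\delta')/n_{\mathrm{val}}}$, not $\log(2/\delta')/n_{\mathrm{val}}$. So that route by itself only justifies an additive $n^{-1/2}$ cost, which exceeds $n^{-2\beta/(2\beta+d)}$ whenever $2\beta>d$. Also, a bound holding with probability $3/4$ must still be tail-integrated before it gives an expectation bound. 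You instead work with the loss difference, where the $\epsilon^2$ terms cancel and the variance is $\lesssim\|\hat r\|_{L_2(P_X)}^2\le 2(\|\hat r-r^*\|^2+\|r^*\|^2)$. Splitting on whether $|\Delta|$ exceeds the smaller candidate risk, you compute the conditional expected excess directly and get at most $\min\{\|r^*\|^2,\|\hat r-r^*\|^2\}+C/m$. This is what actually delivers the $C'/n$ term, at the harmless price of a factor $2$ on the minimum, which the theorem's $C$ absorbs. The same variance-adaptive mechanism is what a $(1+\gamma)$-type factor as in Theorem~\ref{thm:oracle} reflects. When you write it out, make the bound $|Z_j|=|g(x_j)|\,|g(x_j)-2r^*(x_j)|\le 3B|g(x_j)|$ explicit. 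Then either check Bernstein's moment condition for $2\epsilon_j g(x_j)$ unconditionally, with variance proxy $\asymp\sigma^2\|g\|_{L_2(P_X)}^2$ and scale depending on $(B,\sigma)$, or, if you condition on the $x_j$, add the extra concentration step relating $\sum_j g(x_j)^2$ to $m\|g\|_{L_2(P_X)}^2$.
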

\textit{See Appendix \ref{app:proofs:upper} for the detailed construction and proof.}

\textbf{Implication.} This result shows that the estimator matches the leading minimax term up to the additive validation-selection cost $C'/n$. Since $2\beta/(2\beta+d)<1$, this cost is asymptotically lower order than the standard nonparametric term, but it can be the active finite-sample price in the strongly prior-dominated regime.

\begin{corollary}[Near-Minimax Rate of the Safe Residual Estimator]
\label{cor:alg1_optimality}
Let $\hat{f}_{\mathrm{safe}}$ denote the output of Algorithm~\ref{alg:bb_residual} with split ratio $\rho \in (0,1)$. Under Assumptions~\ref{ass:design}--\ref{ass:holder}, $\hat{f}_{\mathrm{safe}}$ is minimax-optimal up to the validation-selection cost and a constant inflation factor from sample splitting:
\begin{equation}
\begin{aligned}
\sup_{f^* \in \mathcal{F}(\delta)} \mathbb{E}\|\hat{f}_{\mathrm{safe}} - f^*\|_{L_2(P_X)}^2 \\
\le (1-\rho)^{-\frac{2\beta}{2\beta+d}} \cdot C \min \left( \delta^2, n^{-\frac{2\beta}{2\beta+d}} \right) \\
\quad + \tilde{O}(n^{-1}).
\end{aligned}
\end{equation}
In particular, with $\rho=0.2$ and $\beta \approx d$, the inflation factor is $(1-\rho)^{-\frac{2\beta}{2\beta+d}} \approx 1.16$. Thus, Algorithm~\ref{alg:bb_residual} matches the leading lower-bound term in Theorem~\ref{thm:lower_bound} up to constant factors and the additive validation-selection cost.
\end{corollary}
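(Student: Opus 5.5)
We treat Algorithm~\ref{alg:bb_residual} as a concrete instance of the construction behind Theorem~\ref{thm:upper_bound} and make the dependence on the split ratio explicit. Write $n_{\mathrm{tr}}=\lfloor(1-\rho)n\rfloor$ and $n_{\mathrm{val}}=n-n_{\mathrm{tr}}\approx\rho n$. Let $\hat r$ be a minimax-optimal nonparametric regressor fitted to $\{(x_i,y_i-f_0(x_i))\}$ on $\mathcal{D}_{\mathrm{tr}}$, and truncate it at level $B$. Candidate choices are a local polynomial estimator or least squares over a sieve of $\mathcal{H}(\beta,L)$ with bandwidth/dimension tuned for $\mathcal{H}(\beta,L)$. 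Since $y-f_0(x)=r^*(x)+\epsilon$ with $r^*\in\mathcal{H}(\beta,L)$, $\|r^*\|_\infty\le B$, standard results under Assumption~\ref{ass:design} give, uniformly over $\mathcal{F}(\delta)$,
\begin{equation*}
\mathbb{E}\|\hat r-r^*\|_{L_2(P_X)}^2\le C_1 n_{\mathrm{tr}}^{-\frac{2\beta}{2\beta+d}}=C_1(1-\rho)^{-\frac{2\beta}{2\beta+d}}n^{-\frac{2\beta}{2\beta+d}}.
\end{equation*}
By the risk identity \eqref{eq:risk_equivalence}, this is the risk of $f_0+\hat r$. The risk of $f_0$ is $\|r^*\|^2\le\delta^2$. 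Both bounds are free of $f_0$, because only the residual enters.

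The second step is to invoke the holdout oracle inequality, Theorem~\ref{thm:oracle}, conditionally on $\mathcal{D}_{\mathrm{tr}}$. Given $\mathcal{D}_{\mathrm{tr}}$, the two candidates $0$ and $\hat r$ are fixed residual functions bounded by $B$. The validation losses compare $\sum(\epsilon+r^*)^2$ with $\sum(\epsilon+r^*-\hat r)^2$, so their difference is an average of i.i.d.\ bounded-plus-Gaussian terms whose mean is $\|r^*-\hat r\|^2-\|r^*\|^2$ and whose variance is proportional to that same squared distance $\|\hat r\|^2$ up to constants.

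If Theorem~\ref{thm:oracle} is stated in the form
\begin{equation*}
\mathbb{E}\bigl[\|\hat f_{\mathrm{safe}}-f^*\|^2\mid\mathcal{D}_{\mathrm{tr}}\bigr]\le(1+\eta)\min\{\|r^*\|^2,\|\hat r-r^*\|^2\}+\frac{C_\eta\log(\cdot)}{n_{\mathrm{val}}},
\end{equation*}
we take expectations and use $\mathbb{E}\min\{a,Z\}\le\min\{a,\mathbb{E}Z\}$. This yields the leading term
\begin{equation*}
(1+\eta)\min\bigl\{\delta^2,\;C_1(1-\rho)^{-\frac{2\beta}{2\beta+d}}n^{-\frac{2\beta}{2\beta+d}}\bigr\}.
\end{equation*}
Because $(1-\rho)^{-\frac{2\beta}{2\beta+d}}\ge 1$, this is at most $(1-\rho)^{-\frac{2\beta}{2\beta+d}}C\min\{\delta^2,n^{-\frac{2\beta}{2\beta+d}}\}$ after absorbing $(1+\eta)C_1$ into $C$. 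The remainder is $O(1/(\rho n))=\tilde O(n^{-1})$, with the $\rho$ dependence and any logarithmic factor absorbed into the $\tilde O$.

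The final remark is arithmetic: with $\beta=d$ the exponent is $2/3$, and $0.8^{-2/3}\approx1.16$. Comparison with Theorem~\ref{thm:lower_bound} then gives the stated near-optimality.

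The main obstacle is the second step: getting a selection error whose leading constant is essentially $1$, namely $(1+\eta)$, while the remainder stays at order $1/n$ rather than $1/\sqrt n$. A naive Hoeffding bound on the loss difference gives only $n^{-1/2}$. We will try a Bernstein argument that exploits the variance--mean relation from the second step: the loss difference has variance at most $c\,\|\hat r\|^2$, and $\|\hat r\|^2\le2(\|r^*\|^2+\|\hat r-r^*\|^2)$. A wrong choice therefore happens with small probability unless the two risks differ by only $O(\log n/n_{\mathrm{val}})$. Gaussian noise is handled by its sub-exponential tails, since the cross term $\epsilon\hat r$ is sub-Gaussian given $\mathcal{D}_{\mathrm{tr}}$. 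If Theorem~\ref{thm:oracle} already has this form, the step reduces to checking its hypotheses, which are boundedness by $B$ and independence of the split.
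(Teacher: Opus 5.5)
Your proposal is correct and follows the paper's route: you plug $n_{\mathrm{tr}}=(1-\rho)n$ and $n_{\mathrm{val}}=\rho n$ into the candidate-risk-plus-holdout-selection bound behind Theorem~\ref{thm:upper_bound}. You then factor $(1-\rho)^{-2\beta/(2\beta+d)}$ out of the minimum, justifying this explicitly via $(1-\rho)^{-2\beta/(2\beta+d)}\ge 1$ (the paper leaves this step implicit), and treat $C'/(\rho n)$ as the $\tilde O(n^{-1})$ remainder. Your handling of the selection step is also more careful than what the paper relies on. Its Lemma~\ref{lem:val_oracle_two} claims leading constant $1$ with a $1/n_{\mathrm{val}}$ remainder from $2\max_k|\hat L(\hat f_k)-L(\hat f_k)|$, an argument that only gives order $\sqrt{\log(2/\delta')/n_{\mathrm{val}}}$, and it passes to expectations by fixing $\delta'=1/4$. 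Your variance--mean Bernstein argument with constant $1+\eta$, followed by tail integration (or $\delta'=1/n$ plus boundedness on the failure event), is what actually delivers the $\tilde O(n^{-1})$ term.
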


\subsection{Theoretical Guarantee for Safe Selection}
\label{sec:oracle_inequality}

A key component of our method is the validation-based selection mechanism (Step 2 in Algorithm~\ref{alg:bb_residual}).
We now give a rigorous oracle inequality for the selected predictor.

\paragraph{Two notions of risk.}
For any (possibly data-dependent) predictor $f$, define the \emph{population estimation error}
\[
\Err(f) := \|f-f^*\|_{L_2(P_X)}^2,
\]
which is random because $f$ depends on the training data.
Also define the \emph{prediction risk}
\[
L(f) := \mathbb{E}\big[(Y-f(X))^2 \,\big|\, f\big].
\]
Under the model $Y=f^*(X)+\epsilon$ with $\mathbb{E}[\epsilon^2]=\sigma^2$, we have the exact identity
\begin{equation}
\label{eq:pred_vs_est}
L(f)=\Err(f)+\sigma^2.
\end{equation}
Hence, oracle guarantees for prediction risk immediately translate to estimation error, up to an additive constant $\sigma^2$.

\paragraph{Setup.}
Let $\hat f_{\mathrm{BB}}:=f_0$ and $\hat f_{\mathrm{Res}}:=f_0+\hat r$, where $\hat r$ is trained on $\mathcal{D}_{\mathrm{tr}}$.
On an independent validation set $\mathcal{D}_{\mathrm{val}}$ of size $n_{\mathrm{val}}$, define the empirical validation losses
\[
\hat L_{\mathrm{val}}(f):=\frac{1}{n_{\mathrm{val}}}\sum_{(x,y)\in\mathcal{D}_{\mathrm{val}}}(y-f(x))^2,
\]
and select $\hat f_{\mathrm{safe}}\in\{\hat f_{\mathrm{BB}},\hat f_{\mathrm{Res}}\}$ by minimizing $\hat L_{\mathrm{val}}(\cdot)$.

\begin{theorem}[Oracle Inequality for Safe Selection (holdout)]
\label{thm:oracle}
Assume $\epsilon$ is sub-Gaussian with parameter $\sigma$ (Gaussian is a special case), and
$\|f^*\|_\infty\le B_0+B$ (implied by Assumption~\ref{ass:holder}).
Fix any $\gamma\in(0,1)$ and $\delta'\in(0,1)$.
Then, conditional on the training data $\mathcal{D}_{\mathrm{tr}}$, with probability at least $1-\delta'$ over the validation sample $\mathcal{D}_{\mathrm{val}}$,
\begin{equation}
\label{eq:oracle_maintext}
\begin{aligned}
    \Err(\hat f_{\mathrm{safe}}) \le {} & (1+\gamma) \\
    & \cdot \min\big\{\Err(\hat f_{\mathrm{BB}}),\Err(\hat f_{\mathrm{Res}})\big\} \\
    & + C\frac{\log(2/\delta')}{n_{\mathrm{val}}},
\end{aligned}
\end{equation}
where $C$ depends only on $(B_0+B,\sigma)$.
\end{theorem}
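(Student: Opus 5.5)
The plan is to condition on $\mathcal{D}_{\mathrm{tr}}$, so that $\hat f_{\mathrm{BB}}$ and $\hat f_{\mathrm{Res}}$ become two \emph{fixed} functions. I would then control a single random quantity: the difference of their validation losses. I will also use that $\hat r$ is truncated to $[-B,B]$. This is a standard step in the construction of Theorem~\ref{thm:upper_bound}, and it can only decrease $\Err(\hat f_{\mathrm{Res}})$, since $|r^*|\le B$. With truncation, both candidates are bounded by $B_0+B$, so $g:=\hat f_{\mathrm{BB}}-f^*$ and $h:=\hat f_{\mathrm{Res}}-f^*$ satisfy $\|g\|_\infty,\|h\|_\infty\le M:=2(B_0+B)$. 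The constant $C$ must not depend on $\hat r$, and without truncation it would have to.

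\textbf{Step 1 (decomposition).} For a validation pair $(x,y)$ with $y=f^*(x)+\epsilon$, I would write
\[
Z:=(y-\hat f_{\mathrm{BB}}(x))^2-(y-\hat f_{\mathrm{Res}}(x))^2=\big(g^2-h^2\big)(x)-2\epsilon\,(g-h)(x).
\]
Then $\mathbb{E}[Z\mid \mathcal{D}_{\mathrm{tr}}]=\Err(\hat f_{\mathrm{BB}})-\Err(\hat f_{\mathrm{Res}})$, which is also consistent with \eqref{eq:pred_vs_est}. The key variance bound is
\[
\mathrm{Var}(Z)\lesssim (M^2+\sigma^2)\,\mathbb{E}(g-h)^2\le 2(M^2+\sigma^2)\big(\Err(\hat f_{\mathrm{BB}})+\Err(\hat f_{\mathrm{Res}})\big),
\]
because $|g^2-h^2|=|g-h|\,|g+h|\le 2M|g-h|$. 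The point is that the variance scales with the errors themselves, not with a constant, and this is what gives a fast $1/n_{\mathrm{val}}$ rate rather than $1/\sqrt{n_{\mathrm{val}}}$.

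\textbf{Step 2 (concentration).} $Z$ is the sum of a bounded term and $\epsilon$ times a bounded term, so it is sub-exponential with scale $\lesssim M(M+\sigma)$ and variance as above. Bernstein's inequality for sub-exponential variables then gives, with probability at least $1-\delta'$, two-sided deviation bounds with $t=\log(2/\delta')$:
\[
\big|\bar Z-\mathbb{E}Z\big|\le \sqrt{\tfrac{K(a+b)t}{n_{\mathrm{val}}}}+\tfrac{Kt}{n_{\mathrm{val}}}.
\]
Here $a,b$ are the two errors and $K$ depends only on $(B_0+B,\sigma)$. This Bernstein step for the product term $\epsilon(g-h)$ is where I expect the main care to be needed.

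If the sub-exponential route is awkward, I would split $\bar Z$ into two parts. The bounded part $\overline{g^2-h^2}$ is handled by the classical Bernstein inequality. For the noise part, I would condition on the $x_i$: the term $\tfrac1{n}\sum\epsilon_i(g-h)(x_i)$ is then sub-Gaussian with proxy $\sigma^2\,\overline{(g-h)^2}/n$. A second Bernstein bound shows $\overline{(g-h)^2}\le 2\,\mathbb{E}(g-h)^2+O(M^2t/n)$. A union bound over these events costs only constant factors inside the logarithm, which can be absorbed into $C$.

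\textbf{Step 3 (solve for the selected error).} If the selected candidate is the better one in population, the inequality is trivial. Otherwise, let $s$ denote the selected candidate, with error $a_s$, and $o$ the other, with error $a_o$. Since $s$ was selected, its empirical validation loss is no larger than that of $o$, so Step 2 yields
\[
a_s-a_o\le\sqrt{K(a_s+a_o)t/n_{\mathrm{val}}}+Kt/n_{\mathrm{val}}.
\]
Applying AM--GM in the form $\sqrt{K(a_s+a_o)t/n}\le\eta(a_s+a_o)+Kt/(4\eta n)$ gives
\[
(1-\eta)a_s\le(1+\eta)a_o+K(1+1/(4\eta))t/n_{\mathrm{val}}.
\]
Choosing $\eta$ so that $(1+\eta)/(1-\eta)=1+\gamma$ produces the claimed bound $(1+\gamma)\min\{\cdot\}+C\log(2/\delta')/n_{\mathrm{val}}$. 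The resulting constant also depends on $\gamma$, roughly like $1/\gamma$. I would absorb this into $C$ (for fixed $\gamma$), or alternatively state the dependence on $\gamma$ explicitly.
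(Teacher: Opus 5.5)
Your proof is correct, and it takes a genuinely different route from the paper's. For the fast rate in the statement, yours is the route that actually works.

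The paper proves the theorem by invoking Lemma~\ref{lem:val_oracle_two}. That lemma bounds $L(\hat f_{\mathrm{safe}})-\min_k L(\hat f_k)\le 2\max_k|\hat L(\hat f_k)-L(\hat f_k)|$ and applies a sub-exponential Bernstein inequality to each candidate's validation loss separately. It then takes a union bound over the two candidates and passes to $\Err$ via $L=\Err+\sigma^2$. This controls each loss $(Y-\hat f_k(X))^2$ on its own, and each such loss has variance of order one; for Gaussian noise, $\epsilon^2$ alone contributes $2\sigma^4$. So the tail $2\exp(-c\,n_{\mathrm{val}}\min\{t^2,t\})$ only supports $t\asymp\sqrt{\log(2/\delta')/n_{\mathrm{val}}}$. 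The paper's step ``choose $t\asymp\log(2/\delta')/n_{\mathrm{val}}$'' does not give probability $1-\delta'$. Made honest, that route yields a factor-one oracle inequality, but only with the slow remainder $\sqrt{\log(2/\delta')/n_{\mathrm{val}}}$, and the $(1+\gamma)$ slack plays no role in it.

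You instead concentrate the single loss difference $Z$. In $Z$ the $\epsilon^2$ terms cancel, and $\mathrm{Var}(Z)\lesssim (M^2+\sigma^2)\bigl(\Err(\hat f_{\mathrm{BB}})+\Err(\hat f_{\mathrm{Res}})\bigr)$. Together with your AM--GM step, this self-bounding variance converts the $\sqrt{\cdot/n_{\mathrm{val}}}$ deviation into a multiplicative $(1+\gamma)$ plus an additive $\log(2/\delta')/n_{\mathrm{val}}$, which is exactly the form of the statement. Conditional on $\mathcal{D}_{\mathrm{tr}}$ there is only one fixed pair of functions, so you need just one concentration event and no union bound.

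Your remark that $C$ must grow like $1/\gamma$ is correct and unavoidable. Take two bounded candidates with errors of order one that differ by about $n_{\mathrm{val}}^{-1/2}$, with $\mathbb{E}(g-h)^2$ of order one. Then the worse candidate is selected with constant probability. Along $\gamma\asymp n_{\mathrm{val}}^{-1/2}$ the bound therefore forces $C\gtrsim 1/\gamma$. So the statement's claim that $C$ depends only on $(B_0+B,\sigma)$ should read $(B_0+B,\sigma,\gamma)$.

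Both you and the paper rely on truncating or clipping the residual candidate so that $\hat f_{\mathrm{Res}}$ is bounded. This is an implicit extra hypothesis of the theorem. You are right that without it no constant independent of $\hat r$ is possible: a residual that is huge on a set of probability about $1/n_{\mathrm{val}}$ escapes the validation sample with constant probability.
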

\textit{Proof provided in Appendix \ref{app:proofs:oracle}.}

\paragraph{Remark (no negative transfer up to validation cost).}
The oracle inequality formalizes safety: relative to the two candidates considered by the algorithm, the selected predictor is within an additive validation cost of the better candidate. In particular, because $f_0$ is one of the candidates, the selected predictor cannot be substantially worse than the black-box route except for the finite-sample validation term.

\paragraph{Remark (expectation form).}
Integrating \eqref{eq:oracle_maintext} over $\mathcal{D}_{\mathrm{val}}$ yields an expectation bound of the same form (with constants),
and using \eqref{eq:pred_vs_est} one may equivalently state the oracle inequality in prediction risk.
A complete proof is provided in Appendix~\ref{app:proofs}.

\subsection{The Price of Safety: Analysis of Sample Splitting}
\label{sec:splitting_cost}

Our proposed estimator relies on sample splitting, using $n_{tr}$ samples for learning the residual and $n_{val}$ samples for selection. A natural concern is the statistical efficiency loss due to reducing the effective training set size from $n$ to $n_{tr} = (1-\rho)n$, where $\rho \in (0,1)$ is the validation fraction (e.g., $\rho=0.2$).

We quantify this ``price of safety'' as the ratio of the minimax risks. In the sample-dominated regime where learning is necessary, the risk scales as $n^{-\frac{2\beta}{2\beta+d}}$. The inflation factor due to splitting is:
\begin{equation}
\begin{aligned}
    \text{Inflation Factor} & = \frac{((1-\rho)n)^{-\frac{2\beta}{2\beta+d}}}{n^{-\frac{2\beta}{2\beta+d}}} \\
    & = (1-\rho)^{-\frac{2\beta}{2\beta+d}}.
\end{aligned}
\end{equation}
For $\rho=0.2$ and $\beta \approx d$, this factor is $\approx 1.11$, implying only an 11\% risk increase.

In contrast, failing to select (i.e., blindly trusting a biased prior) incurs a risk of $\delta^2$, which can be arbitrarily larger than the optimal rate $n^{-\alpha}$ as $n \to \infty$. Thus, the sample splitting strategy trades a small, constant-factor increase in variance for robust protection against potentially unbounded bias.

\subsection{Proof Intuition: Metric Entropy and Adaptivity}
\label{sec:metrics}

The convergence rates in Theorems \ref{thm:lower_bound} and \ref{thm:upper_bound} are deeply tied to the \emph{metric entropy} of the function classes. Let $H(\epsilon, \mathcal{F}, \|\cdot\|)$ be the $\epsilon$-entropy of class $\mathcal{F}$. The minimax rate for a class is typically determined by the solution to the equation $\epsilon^2 \asymp H(\epsilon, \mathcal{F}) / n$.

For the standard H\"older class $\mathcal{H}(\beta, L)$, the entropy scales as $\epsilon^{-d/\beta}$. However, our black-box assisted class $\mathcal{F}(\delta)$ (Definition \ref{def:Fdelta}) is the intersection of a smoothness ball and an $L_2$ ball of radius $\delta$.

When $\epsilon > \delta$, the $L_2$ constraint is active, and the effective entropy is drastically reduced, leading to the $\delta^2$ floor.
When $\epsilon < \delta$, the smoothness constraint dominates, recovering the $n^{-\frac{2\beta}{2\beta+d}}$ rate.

Our Safe Residual Estimator achieves this regime selection by implicitly performing a multi-scale analysis. The validation step (Algorithm \ref{alg:bb_residual}) acts as a statistical test that checks whether the data-driven correction $\hat{r}$ has a higher signal-to-noise ratio than the prior's error $\delta$. This ensures that we only ``pay'' the metric entropy cost of the full H\"older class when the evidence from $\mathcal{D}_n$ is strong enough to guarantee an improvement over $f_0$.

\section{Experiments}
\label{sec:experiments}

\textbf{Connection to Theoretical Analysis.}
The experiments are designed to test the statistical mechanisms in the theory rather than to provide an exhaustive benchmark against all transfer-learning methods. Each baseline corresponds to a quantity or alternative in the analysis: \textsc{BB-Only} realizes the prior-error floor $\delta^2$; \textsc{Scratch} realizes the standard nonparametric learning route; validation-tuned \textsc{Wgt} tests the linear-combination alternative; \textsc{Concat} tests a simple way to expose the learner to black-box information without residual centering; and \textsc{Residual} tests the estimator predicted by the theory.

\textbf{Synthetic Regression.} This controlled setting satisfies the squared-loss regression assumptions directly and numerically verifies the finite-sample phase transition.

\textbf{Real-World Classification (Vision \& NLP).} Our formal guarantees are for squared-loss regression. The CIFAR-100 and AG News experiments use cross-entropy training and accuracy-based selection, and should not be read as theorem-level extensions of Theorems~\ref{thm:lower_bound}--\ref{thm:oracle}. They provide practice-facing evidence that the same qualitative tradeoff between prior quality and correction complexity appears in classification when viewed at the level of score or logit functions.

For the classification experiments, we adopt the following mapping to connect with our theoretical framework: we treat multi-class classification as a regression problem on logit/score functions. Specifically, for a $K$-class problem, we learn $K$ score functions $\{f_k\}_{k=1}^K$, where each $f_k: \mathcal{X} \to \mathbb{R}$ predicts the logit for class $k$. The black-box prior provides initial logits $\{f_{0,k}\}_{k=1}^K$, and our residual estimator learns corrections $\{r_k\}_{k=1}^K$ such that $f_k^* = f_{0,k} + r_k^*$. The same structural intuition may remain informative, but our theorems do not prove minimax rates for cross-entropy training or accuracy-based selection.

Recent adaptation methods such as continual/class-incremental approaches or weight-space ensembling address neighboring but different access models, often requiring parameter or weight-space access to the pretrained model \cite{gu2023fewshotcontinual,lee2025tripartite}. We therefore discuss them as related work rather than direct baselines for our fixed function-access formulation. All reported classification numbers are mean $\pm$ standard deviation over three independent runs; no best-run selection is used.

We compare five estimators: (1) \textsc{Scratch}: Standard supervised learning (KRR/MLP) trained solely on labeled data. (2) \textsc{BB-Only}: Direct zero-shot/few-shot predictions. (3) \textbf{\textsc{Wgt} (Val-Tuned)}: An ensemble $\alpha f_0(x) + (1-\alpha) \hat{f}_{scratch}(x)$, where $\alpha$ is optimized on the validation set to ensure a strong baseline. (4) \textsc{Concat}: Training on concatenated features $[x, f_0(x)]$. (5) \textbf{\textsc{Residual} (Ours)}: The proposed method learning $y - f_0(x)$ with zero-initialization and safe selection.

\paragraph{Raw vs. Safe Residual.}
We distinguish \textsc{Residual (raw)} that always predicts $f_0(x)+\hat r(x)$,
from our final \textsc{Safe Residual} predictor that performs holdout selection between
$\{f_0,\ f_0+\hat r\}$ as in Algorithm~\ref{alg:bb_residual}. Only the latter enjoys the
(no-negative-transfer) oracle guarantee in Theorem~\ref{thm:oracle}.
The final method does not choose between zero-shot and scratch. It chooses between $f_0$ and the residual-corrected predictor $f_0+\hat r$. Therefore, when $\hat r$ captures a target-task correction, the selected predictor can outperform both the black-box prior and a scratch estimator.

\subsection{Synthetic Validation}

\textbf{1D Phase Transition \& Negative Transfer.}
We generated data where the black-box $f_0$ has a low-frequency residual error. 
Figure \ref{fig:synthetic_vis} illustrates the mechanism and risk profile, while Table \ref{tab:toy_1d} presents the Mean Squared Error (MSE).
Table \ref{tab:toy_1d} reports raw residual to illustrate the learning curve; our final method uses safe selection (Alg.~\ref{alg:bb_residual}) with an oracle guarantee (Thm.~\ref{thm:oracle}). In the ``bad black-box'' regime ($n=20, \delta=1.5$), \textsc{BB-Only} incurs huge error (2.25), but \textsc{Residual (raw)} adapts effectively (0.146), outperforming \textsc{Scratch} (0.189). In the ``small sample'' regime ($n=50, \delta=0.0$), \textsc{Safe Residual} achieves near-zero error (0.0006), vastly outperforming \textsc{Scratch} (0.0325). Notably, at $n=20, \delta=0.8$, \textsc{Safe Residual} (0.097) beats \textsc{Weighted} (0.145), showing the benefit of non-linear correction.

Table \ref{tab:toy_1d} explicitly demonstrates the protection mechanism. In the noise-dominated regime ($n=20, \delta=0.0$), the raw residual estimator overfits the noise (MSE 0.0335). However, the validation-based selection triggers a fallback in 65

\textbf{Breaking the Curse of Dimensionality (High-Dim).}
In the $d=20$ setting (Table \ref{tab:toy_hd}), \textsc{Scratch} suffers from the curse of dimensionality, with risk staying high ($\sim 0.48$) at $n=100$. In contrast, when the black-box is reasonably good ($\delta=0.0$), \textsc{Safe Residual} achieves a risk of \textbf{0.002}, a \textbf{250$\times$ improvement}. Table \ref{tab:toy_hd} reports raw residual to illustrate the high-dimensional variance barrier; our final method uses safe selection (Alg.~\ref{alg:bb_residual}) with an oracle guarantee (Thm.~\ref{thm:oracle}).

In high dimensions ($d=20$), the phase transition is delayed. As shown in Table \ref{tab:toy_hd} ($\delta=1.0$), the sample size $n=100$ is insufficient to resolve the residual, causing the raw estimator to degrade (1.026) compared to the black-box (0.994). The safe selection mechanism acts as a critical brake, maintaining performance near the black-box baseline.

\subsection{Vision: CLIP on CIFAR-100}

\begin{figure}[t]
\centering
\includegraphics[width=0.95\columnwidth]{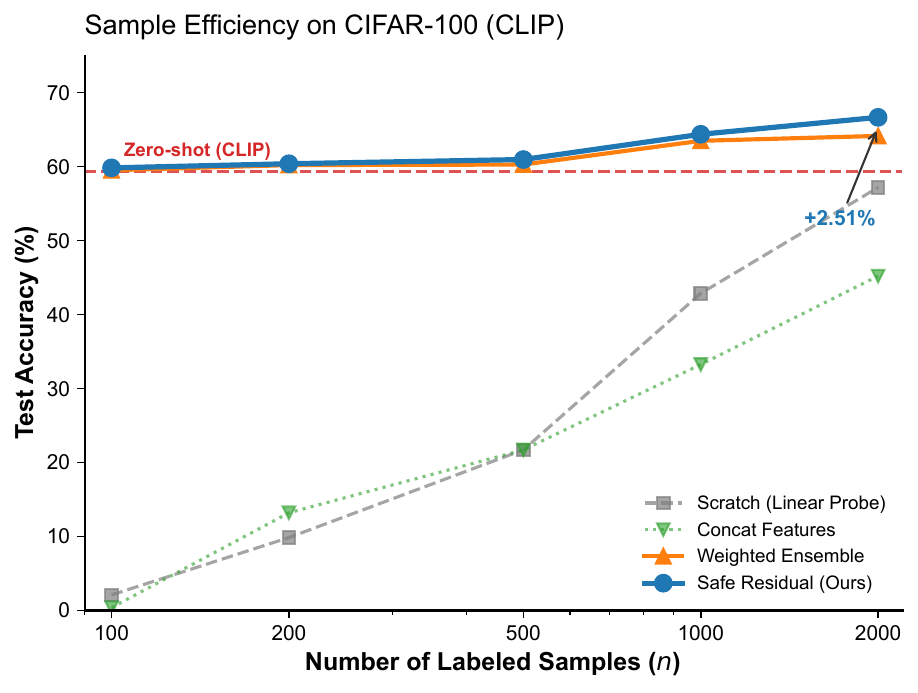}
\caption{\textbf{Sample Efficiency on CIFAR-100.} Comparison of test accuracy as the number of labeled samples $n$ increases.
\textsc{Scratch} (gray) and \textsc{Concat} (green) degrade severely in the few-shot regime.
In contrast, our \textsc{Safe Residual} estimator (blue) consistently outperforms the strong \textsc{Weighted} ensemble baseline (orange).
Notably, at $n=2000$, our method achieves a \textbf{2.5\%} gain over the weighted ensemble.}
\label{fig:vision_results}
\end{figure}

\begin{table}[t]
\caption{\textbf{CIFAR-100 Accuracy (\%)} (mean $\pm$ std over 3 runs). 
\BB\ (CLIP Zero-shot) is 59.39\%. Our \RES\ method consistently outperforms baselines. Notably, at $n=2000$, it surpasses the strong \WGT\ (Validation-Tuned) ensemble by \textbf{2.7\%}. Zero-initialization provided implicit regularization, preventing negative transfer (Fallback=0.0\% across all settings).}
\label{tab:vision_overview}
\centering
\begin{small}
\begin{sc}
\setlength{\tabcolsep}{3.0pt}
\renewcommand{\arraystretch}{1.02}
\begin{tabular}{lcccc}
\toprule
$n$ & \SCR & \CAT & \WGT(Val) & \textbf{Res(safe)} \\
\midrule
100  & 2.1$\pm$0.3 & 0.4$\pm$0.1 & 59.3$\pm$0.2 & \textbf{59.8}$\pm$\textbf{0.2} \\
200  & 9.9$\pm$0.5 & 13.2$\pm$0.6 & 54.5$\pm$0.3 & \textbf{60.4}$\pm$\textbf{0.2} \\
500  & 21.7$\pm$0.8 & 21.7$\pm$0.9 & 59.4$\pm$0.3 & \textbf{61.0}$\pm$\textbf{0.3} \\
1000 & 42.9$\pm$1.2 & 33.3$\pm$1.5 & 61.8$\pm$0.4 & \textbf{64.4}$\pm$\textbf{0.4} \\
2000 & 57.2$\pm$1.5 & 45.2$\pm$1.8 & 63.9$\pm$0.5 & \textbf{66.7}$\pm$\textbf{0.5} \\
\bottomrule
\end{tabular}
\end{sc}
\end{small}
\end{table}

\textbf{Comparison with Baselines.}
The results are summarized in Table \ref{tab:vision_overview} and visualized in Figure \ref{fig:vision_results}.
At $n=2000$, \RES\ reaches \textbf{66.7\%}, significantly outperforming \WGT\ (63.9\%) and \CAT\ (45.2\%). \RES\ consistently beats \WGT\ across all sizes, providing practice-facing evidence for the benefit of non-linear correction. 
\textbf{Addressing Baseline Instability:} At $n=200$, we observe that \WGT\ underperforms the zero-shot baseline (54.5\% vs 59.4\%). This occurs because the validation set is too small to reliably estimate the optimal mixing weight $\alpha$, leading to overfitting. Similarly, \CAT\ fails completely (0.4\%) due to the curse of dimensionality. Our \RES\ method avoids these pitfalls through zero-initialization.
Furthermore, consistent with our $L_2$ theory, \RES\ achieves the lowest Brier Score (MSE) of \textbf{0.443} at $n=2000$ (vs 0.538 for Black-Box).

\textbf{Robustness in Few-Shot.}
With only 100 samples (1 per class), \textsc{Safe Residual} maintains performance (59.82\%) slightly above the Zero-shot baseline (59.39\%), successfully avoiding the collapse seen in \textsc{Concat} (0.39\%) and \textsc{Scratch} (2.11\%).

\textbf{Ablation Studies.}
We rigorously analyze the sensitivity of our method to the validation fraction $\rho$ and the impact of zero-initialization in \textbf{Appendix G}. Figures \ref{fig:ablation_rho} and \ref{fig:ablation_zero} demonstrate that our method remains stable for $\rho \in [0.15, 0.3]$ and zero-initialization significantly reduces early training variance compared to random initialization.

\subsection{NLP: Qwen3-8B on AG News}

We extend our evaluation to NLP using Qwen3-8B \cite{qwen3}. We extract 4096-dimensional feature embeddings and use prompt-based generation for $f_0$. The results are summarized in Table \ref{tab:nlp}.

\textbf{Robustness in Ultra-Low Sample Regime.}
The NLP task demonstrates a key advantage of our residual structure in the extreme few-shot setting ($n=16$). As shown in Table \ref{tab:nlp}, the black-box prior provides a strong baseline (74.06\%). Critically, in the ultra-low sample regime, the \WGT\ ensemble suffers from \emph{negative transfer} (71.21\% $<$ 74.06\%) due to variance in weight estimation—the limited data is insufficient to reliably estimate the optimal mixing weight. In contrast, our \RES\ estimator achieves \textbf{75.23\%} at $n=16$, demonstrating that the residual structure with zero-initialization is more robust than linear ensembling when data is scarce.

\begin{table}[t]
\caption{\textbf{NLP Task (AG News with Qwen3-8B, Accuracy \%)} (mean $\pm$ std over 3 runs). 
\BB\ baseline is 74.06\%. At extremely low sample sizes ($n=64$), the \WGT\ (Val-Tuned) ensemble collapses to the scratch performance (54.7\%) due to validation noise, while \RES\ remains robust (74.7\%). Fallback=0.0\% across all settings.}
\label{tab:nlp}
\centering
\begin{small}
\begin{sc}
\setlength{\tabcolsep}{3.0pt}
\renewcommand{\arraystretch}{1.02}
\begin{tabular}{lcccc}
\toprule
$n$ & \SCR & \WGT(Val) & \CAT & \textbf{Res(safe)} \\
\midrule
16  & 27.4$\pm$1.5 & 71.2$\pm$0.8 & 31.0$\pm$1.8 & \textbf{75.2}$\pm$\textbf{0.6} \\
32  & 29.3$\pm$1.6 & 74.5$\pm$0.7 & 34.5$\pm$1.7 & \textbf{74.1}$\pm$\textbf{0.5} \\
64  & 54.7$\pm$2.0 & 54.7$\pm$0.7 & 48.8$\pm$2.2 & \textbf{74.7}$\pm$\textbf{0.5} \\
128 & 64.5$\pm$1.8 & 72.7$\pm$0.6 & 65.1$\pm$1.9 & \textbf{76.6}$\pm$\textbf{0.7} \\
256 & 69.7$\pm$1.5 & 75.6$\pm$0.5 & 72.3$\pm$1.6 & \textbf{77.0}$\pm$\textbf{0.6} \\
\bottomrule
\end{tabular}
\end{sc}
\end{small}
\end{table}

\paragraph{Analysis of Baseline Instability.} Interestingly, we observe that the validation-tuned weighted ensemble (\WGT) exhibits instability in low-sample regimes (e.g., $n=64$ in Table \ref{tab:nlp}). Due to the small validation set size, the selection of $\alpha$ becomes noisy, occasionally causing the model to collapse to the weaker scratch learner (54.7\% vs 74.7\% for Residual). In contrast, our \textbf{Residual} estimator, regularized by zero-initialization, provides more stable performance in this fixed black-box access setting.

\textbf{Scaling and Crossover.}
At $n=256$, the gap narrows but remains significant (\WGT\ 75.59\% vs.\ \RES\ 76.95\%). This suggests that residual learning is most useful in few-shot regimes via inductive bias (zero-initialization). The fallback rate remains 0.0\% across all settings, indicating that zero-initialization provides implicit regularization even in the $n=16$ regime.

\section{Discussion}
\label{sec:discussion}

\textbf{Statistical Interpretation.}
From a decision-theoretic perspective, the black-box $f_0$ centers the hypothesis space. Our residual estimator restricts the search to a local ball $\mathcal{B}(f_0, \delta)$, reducing metric entropy and breaking the curse of dimensionality.

\textbf{Scope of the formal guarantees.}
The theorems in this paper are for squared-loss regression under the stated random-design and residual-regularity assumptions. The classification experiments are meant to probe the same residual-correction mechanism in practical settings, not to establish minimax guarantees for cross-entropy loss or accuracy-based model selection.

\textbf{Design assumptions.}
The compact support and bounded-density assumptions are used to obtain clean norm equivalences and packing arguments. They are standard in nonparametric random-design theory, but relaxing them to unbounded or heavy-tailed covariate distributions is an important direction for future work.

\textbf{Selection cost.}
The additive $O(1/n)$ term is the finite-sample cost of holdout-based selection. It is lower order than the nonparametric rate when $d<\infty$, but can dominate in a strongly prior-dominated regime. Removing or reducing this term may require a different adaptation mechanism, such as a Lepski-type method without sample splitting, or a sharper validation analysis.

\textbf{Generative models.}
The same black-box-prior plus residual-correction idea may be relevant to score or denoising-function estimation in diffusion models, but our current results do not establish such guarantees. We leave this extension to future work.

\textbf{Limitations.}
Our current analysis assumes homoscedastic Gaussian noise and a static black-box. Extending the framework to heavy-tailed noise (e.g., via Huber loss) and online learning settings remains an open problem.

\textbf{Explicit vs. Implicit Safety.}
Our experiments reveal two distinct mechanisms for safety. In high-noise synthetic settings (Table \ref{tab:toy_1d}, \ref{tab:toy_hd}), the \emph{explicit} validation-based fallback is active (Fallback $>0$), effectively filtering out noisy updates. In real-world tasks (Vision/NLP), where pre-trained representations are robust, we observe \emph{implicit} safety: the fallback rate is 0.0, yet no negative transfer occurs (e.g., Table \ref{tab:nlp}, $n=16$). This is attributed to our zero-initialization strategy, which places the estimator exactly at $f_0$ at the start of training. This strong inductive bias ensures that the model only deviates from the prior when the data signal is sufficiently strong, effectively achieving ``soft'' safety without triggering the hard fallback switch.

\section{Conclusion}
\label{sec:conclusion}

We formulated black-box assisted regression as a finite-sample minimax problem and characterized a phase transition at $\delta_c(n)\asymp n^{-\beta/(2\beta+d)}$. The Safe Residual Estimator matches the leading minimax term up to an additive validation-selection cost and satisfies an oracle-style safety guarantee relative to the black-box predictor. Synthetic regression experiments verify the predicted phase transition, while vision and NLP experiments suggest that residual correction with zero-initialization is a useful practical principle beyond the formal regression setting.

\section*{Acknowledgements}
The author would like to thank the School of Mathematics and Statistics, Changsha University of Science and Technology, for its support.

\section*{Impact Statement}
This paper presents work whose goal is to advance the field of Machine Learning. There are many potential societal consequences of our work, none of which we feel must be specifically highlighted here.

\FloatBarrier
\clearpage
\bibliographystyle{icml2026}

\newpage
\appendix
\onecolumn


\section{Extended Related Work Discussion}
\label{app:related_full}

This appendix provides a comprehensive review of related literature organized by theme.

\subsection{Prediction-Powered Inference and Post-Prediction Inference}
Prediction-Powered Inference (PPI) provides a principled framework for constructing confidence intervals by combining a small labeled dataset with a large pool of machine predictions, without assuming the predictor is correct \cite{angelopoulos2023prediction}. Subsequent work has sharpened the computational and statistical properties of this paradigm, including adaptive and more efficient variants (PPI++) \cite{angelopoulos2024ppiplusplus}, cross-fitting style constructions \cite{zrnic2024crossppi}, and Bayesian-regularized extensions \cite{hofer2024bayesianppi,cortinovis2025fabppi}. Recent analyses further expand this scope to stratified evaluations \cite{fisch2024stratifiedppi}, settings with imputed covariates \cite{kluger2025imputedcovariates}, and non-asymptotic limits \cite{mani2025nofreelunchppi}.

\subsection{Transfer Learning, Domain Adaptation, and Black-Box Regimes}
Classical transfer learning theory typically assumes access to source data, shared representations, or parametric model classes \cite{cai2024transfernonparametric,lin2025robusttransfer}. For broader context, we refer to foundational surveys on transfer learning \cite{pan2010transfer,weiss2016transfer}. In parallel, hypothesis-transfer and source-free domain adaptation study regimes where source data are inaccessible \cite{kuzborskij2013stability,liang2020sht}. Recent few-shot adaptation methods also study continual or class-incremental settings, or perform adaptation in parameter or weight space \cite{gu2023fewshotcontinual,lee2025tripartite}. These are relevant neighboring empirical directions, but they assume an access model different from ours. Our analysis treats the pretrained system as a fixed function-access black box: the learner cannot access source data or model parameters, only predictions.

\subsection{Pseudo-Labeling, Self-Training, and Test-Time Adaptation}
A large body of empirical practice combines weak/cheap labels with a small amount of gold supervision, dating back to early work on word sense disambiguation \cite{yarowsky1995unsupervised}. Modern approaches include pseudo-labeling and self-training for deep networks \cite{lee2013pseudolabel,xie2020noisystudent,sohn2020fixmatch}. Several works provide theoretical guarantees under distribution or subpopulation shift \cite{pmlr-v139-cai21b,NEURIPS2024_1b99db17}; our distinction is therefore not the absence of theory in self-training, but the different statistical object: minimax risk for fixed black-box assisted regression and the phase transition induced by the prior error $\delta$. Recent surveys consolidate this area \cite{amini2024selftraining,kage2024pseudolabelreview}. Test-time adaptation (TTA) further explores adjusting models under shift \cite{liang2025ttasurvey}, with algorithms such as entropy-minimization \cite{wang2021tent} and continual adaptation \cite{wang2022cotta}.

\subsection{Noisy-Label Correction}
Noisy-label correction studies how to learn when the observed labels are corrupted or biased. Our setting is different: the labels are standard noisy regression observations (true signal plus i.i.d.\ noise), not corrupted or mislabeled; the imperfect object is a fixed black-box predictor $f_0$. Thus the statistical question is not label-noise correction, but whether and when a black-box prediction can be safely used as a prior.

\subsection{Model Selection View: Holdout, Cross-Validation, and Oracle Inequalities}
Our ``safe selection'' step is a two-model holdout model selection procedure. This connects to the classical cross-validation literature \cite{arlot2010surveycv} and concentration-based analyses for model selection \cite{massart2007concentration}.

\subsection{Residual Connections and Zero-Initialization in Deep Networks}
Our residualization principle mirrors the residual learning perspective popularized by ResNets \cite{he2016resnet}. For the specific inductive bias ``start exactly at the black-box'' (zero-initialization), we note the close relationship to deep residual training without normalization \cite{zhang2019fixup}. General foundations of deep learning optimization are covered in \citet{goodfellow2016deeplearning}.

\subsection{Kernel Ridge Regression and Approximation Theory}
When the residual is learned by kernel methods, sharp rates are classically studied in \citet{caponnetto2007optimalrates}. For broader statistical learning foundations, see \citet{hastie2009elements} and \citet{scholkopf2002learning}. Regarding the approximation power of our function classes, recent work establishes optimal rates for deep ReLU networks in Sobolev and Besov spaces \cite{siegel2023optimalapprox,yang2025optimalapprox_sobolev_besov} and analyzes Transformers as nonparametric learners \cite{kim2024transformersminimax}.

\subsection{Datasets and Foundation Models}
For CIFAR-100, we cite the original technical report \cite{krizhevsky2009cifar}. For the language model backbones used in our NLP experiments, we utilize the Qwen model family \cite{bai2023qwen}. Local polynomial methods \cite{fan1996localpoly} serve as a canonical nonparametric reference.

\section{Notation and Basic Preliminaries}
\label{app:notation}

This appendix collects notation and basic facts used throughout the proofs in Appendix~\ref{app:proofs}. We work under Assumptions~\ref{ass:design}--\ref{ass:holder} unless stated otherwise.

\subsection{Notation}
\label{app:notation:global}

\begin{table}[t]
\centering
\caption{Notation used in the theoretical analysis.}
\label{tab:notation}
\begin{tabular}{ll}
\toprule
Symbol & Meaning \\
\midrule
$f_0$ & fixed black-box predictor \\
$f^*$ & target regression function \\
$r^*=f^*-f_0$ & target residual \\
$\hat r$ & learned residual estimator \\
$\hat f_{\mathrm{res}}=f_0+\hat r$ & residual-corrected candidate \\
$\hat f_{\mathrm{safe}}$ & final Safe Residual Estimator output \\
$\delta=\|f^*-f_0\|_{L_2(P_X)}$ & black-box prior error radius \\
$\rho$ & validation fraction \\
\bottomrule
\end{tabular}
\end{table}

\paragraph{Data and distributions.}
We observe $(x_i,y_i)_{i=1}^n$ with $x_i \overset{iid}{\sim} P_X$ on $\mathcal{X}=[0,1]^d$ and
\begin{equation}
    y_i = f^*(x_i) + \epsilon_i,
    \qquad \epsilon_i \overset{iid}{\sim} \mathcal{N}(0,\sigma^2),
\end{equation}
independent of $x_i$.
Assumption~\ref{ass:design} states that $P_X$ has a density $p$ w.r.t.\ Lebesgue measure on $[0,1]^d$ satisfying
\begin{equation}
\label{eq:density_bounds}
    0 < c \le p(x) \le C < \infty \quad \text{for all } x\in[0,1]^d.
\end{equation}

\paragraph{Black-box predictor and residual.}
The learner has access to a fixed black-box predictor $f_0:\mathcal{X}\to\mathbb{R}$. We define the residual
\begin{equation}
    r^*(x) := f^*(x) - f_0(x).
\end{equation}
Assumption~\ref{ass:holder} posits $r^*\in \mathcal{H}(\beta,L)$ and $\|r^*\|_{L_2(P_X)}\le \delta$, along with uniform boundedness $\|r^*\|_\infty\le B$ and $\|f_0\|_\infty\le B_0$.

\paragraph{Norms.}
For a measurable function $g:\mathcal{X}\to\mathbb{R}$, define
\begin{equation}
    \|g\|_{L_2(P_X)}^2 := \mathbb{E}_{X\sim P_X}[g(X)^2],
    \qquad
    \|g\|_{L_2}^2 := \int_{[0,1]^d} g(x)^2 \, dx,
    \qquad
    \|g\|_\infty := \sup_{x\in\mathcal{X}} |g(x)|.
\end{equation}
When there is no ambiguity, we write $\|g\|_2 := \|g\|_{L_2(P_X)}$.

\paragraph{Risk.}
For an estimator $\hat f$ (measurable w.r.t.\ the data), the squared $L_2(P_X)$ risk is
\begin{equation}
    \mathcal{R}(\hat f, f^*) := \mathbb{E}\big[\|\hat f - f^*\|_{L_2(P_X)}^2\big],
\end{equation}
where the expectation is over the sample $(x_i,y_i)_{i=1}^n$ and any estimator randomness.

\paragraph{Sample splitting.}
When we split the labeled dataset into a training set $\mathcal{D}_{\mathrm{tr}}$ and a validation set $\mathcal{D}_{\mathrm{val}}$ of size $n_{\mathrm{val}}$, we write $n_{\mathrm{tr}}+n_{\mathrm{val}}=n$ and often parameterize $n_{\mathrm{val}}=\rho n$ with $\rho\in(0,1)$.

\subsection{Norm Equivalences Under Random Design}
\label{app:notation:norm_equiv}

The density bounds \eqref{eq:density_bounds} imply that $L_2(P_X)$ and Lebesgue $L_2$ norms are equivalent on $[0,1]^d$ up to constants depending only on $(c,C)$.

\begin{lemma}[Equivalence of $L_2(P_X)$ and Lebesgue $L_2$]
\label{lem:norm_equiv}
Assume \eqref{eq:density_bounds}. Then for any measurable $g:[0,1]^d\to\mathbb{R}$,
\begin{equation}
\label{eq:norm_equiv}
    c \, \|g\|_{L_2}^2 \le \|g\|_{L_2(P_X)}^2 \le C \, \|g\|_{L_2}^2.
\end{equation}
In particular, $\|g\|_{L_2(P_X)} \asymp \|g\|_{L_2}$ with constants depending only on $(c,C)$.
\end{lemma}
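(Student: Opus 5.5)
The plan is to write the $L_2(P_X)$ norm as an integral against the density and then apply the pointwise density bounds. By definition of the $L_2(P_X)$ norm and Assumption~\ref{ass:design}, $P_X$ has Lebesgue density $p$ on $[0,1]^d$. For any measurable $g$, the function $g^2$ is nonnegative and measurable, so the change-of-measure identity
\begin{equation*}
\|g\|_{L_2(P_X)}^2=\mathbb{E}_{X\sim P_X}[g(X)^2]=\int_{[0,1]^d} g(x)^2\,p(x)\,dx
\end{equation*}
holds in $[0,\infty]$. No integrability assumption is needed, since for a nonnegative integrand this follows directly from the definition of a density (first for indicators, then by monotone convergence).

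Next I will use the pointwise sandwich $c\,g(x)^2\le g(x)^2p(x)\le C\,g(x)^2$ for all $x\in[0,1]^d$, which is \eqref{eq:density_bounds} multiplied by $g(x)^2\ge 0$. Integrating over $[0,1]^d$ and using monotonicity of the Lebesgue integral for nonnegative functions gives
\begin{equation*}
c\int g^2\,dx\le\int g^2 p\,dx\le C\int g^2\,dx,
\end{equation*}
which is exactly \eqref{eq:norm_equiv}. This holds even when the quantities are infinite, so $g\in L_2$ if and only if $g\in L_2(P_X)$.

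Taking square roots yields $\sqrt{c}\,\|g\|_{L_2}\le\|g\|_{L_2(P_X)}\le\sqrt{C}\,\|g\|_{L_2}$, which is the stated $\asymp$ with constants depending only on $(c,C)$. There is no real obstacle here. The only point needing care is to justify the density identity for possibly non-integrable $g$, and working with nonnegative integrands in $[0,\infty]$ handles this.
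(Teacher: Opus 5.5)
Your proof is correct and follows the paper's argument: write $\|g\|_{L_2(P_X)}^2=\int g^2 p\,dx$, multiply the pointwise bounds $c\le p\le C$ by $g^2\ge 0$, and integrate. Working in $[0,\infty]$, so that no integrability of $g$ is needed, and taking square roots explicitly are small refinements that the paper leaves implicit.
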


\begin{proof}
By definition, $\|g\|_{L_2(P_X)}^2 = \int g(x)^2 p(x)\,dx$. Using $c\le p(x)\le C$ yields
$c\int g(x)^2 dx \le \int g(x)^2 p(x) dx \le C\int g(x)^2 dx$, which is \eqref{eq:norm_equiv}.
\end{proof}

\subsection{Residual Class and Centering}
\label{app:notation:residual_class}

Recall Definition~\ref{def:Fdelta}:
\begin{equation}
    \mathcal{F}(\delta)
    :=
    \left\{
        f_0 + r :
        r\in\mathcal{H}(\beta,L),\ \|r\|_{L_2(P_X)}\le \delta,\ \|r\|_\infty\le B
    \right\}.
\end{equation}
The bounded-error black-box model can be viewed as a localization of the target around $f_0$ in $L_2(P_X)$.

\begin{lemma}[Risk equivalence under residualization]
\label{lem:risk_equiv_residual}
For any estimator $\hat f$ and the associated residual estimator $\hat r := \hat f - f_0$, we have
\begin{equation}
    \|\hat f - f^*\|_{L_2(P_X)}^2 = \|\hat r - r^*\|_{L_2(P_X)}^2,
\end{equation}
and hence $\mathcal{R}(\hat f, f^*) = \mathcal{R}(\hat r, r^*)$.
\end{lemma}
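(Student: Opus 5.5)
The identity is purely algebraic, so the plan is a pointwise substitution followed by integration. No estimation or probabilistic argument is needed.

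First, fix any realization of the data, and hence of $\hat f$. By Assumption~\ref{ass:holder} we have $f^* = f_0 + r^*$. By definition $\hat f = f_0 + \hat r$. Hence, for every $x\in\mathcal{X}$,
\[
\hat f(x) - f^*(x) = \bigl(f_0(x)+\hat r(x)\bigr) - \bigl(f_0(x)+r^*(x)\bigr) = \hat r(x) - r^*(x).
\]
This uses only that $f_0$ is a fixed deterministic function, finite-valued since $\|f_0\|_\infty\le B_0$, so the cancellation is legitimate pointwise.

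Next, square this identity and integrate against $P_X$. This gives $\|\hat f - f^*\|_{L_2(P_X)}^2 = \|\hat r - r^*\|_{L_2(P_X)}^2$ for every realization, including the case where both sides are $+\infty$. Measurability of $\hat r$ follows from that of $\hat f$, since $f_0$ is measurable.

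Finally, take expectations over the sample $(x_i,y_i)_{i=1}^n$ and any internal randomness of the estimator. Because the equality holds almost surely, the expectations agree, which gives $\mathcal{R}(\hat f,f^*)=\mathcal{R}(\hat r,r^*)$. The only point needing care is the well-definedness of the expectation. Both integrands are nonnegative, so the expectations exist in $[0,\infty]$ and coincide. I do not expect any real obstacle.
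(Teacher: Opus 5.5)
Your proposal is correct and takes the same route as the paper: cancel $f_0$ pointwise to get $\hat f - f^* = \hat r - r^*$, square and integrate against $P_X$, then take expectations over the data. Your added remarks on measurability and on both expectations being well defined in $[0,\infty]$ are harmless refinements that the paper leaves implicit.
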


\begin{proof}
Since $f^* = f_0 + r^*$ and $\hat f = f_0 + \hat r$, we have $\hat f - f^* = \hat r - r^*$ pointwise. Taking $\|\cdot\|_{L_2(P_X)}^2$ yields the identity, and taking expectation over data yields the equality of risks.
\end{proof}

\subsection{Validation Losses Used by Safe Selection}
\label{app:notation:validation_losses}

For completeness, we record the empirical validation losses used in Algorithm~\ref{alg:bb_residual}.
Given a validation set $\mathcal{D}_{\mathrm{val}}=\{(x_i,y_i)\}_{i\in \mathcal{I}_{\mathrm{val}}}$,
\begin{equation}
    \mathcal{L}_{\mathrm{BB}}
    :=
    \sum_{i\in\mathcal{I}_{\mathrm{val}}} \big(y_i - f_0(x_i)\big)^2,
    \qquad
    \mathcal{L}_{\mathrm{Res}}
    :=
    \sum_{i\in\mathcal{I}_{\mathrm{val}}} \big(y_i - (f_0(x_i)+\hat r(x_i))\big)^2.
\end{equation}
The selection variable is $\hat\alpha=\mathbf{1}\{\mathcal{L}_{\mathrm{Res}}<\mathcal{L}_{\mathrm{BB}}\}$, yielding the final predictor
$\hat f_{\mathrm{safe}}(x)=f_0(x)+\hat\alpha\,\hat r(x)$.

\section{Geometric Interpretation and Proof Sketch}
\label{app:geometry}

\subsection{Geometric Interpretation of the Phase Transition}

The phase transition phenomenon identified in Theorem \ref{thm:lower_bound} can be understood geometrically as an intersection of function classes.
The learner knows that $f^*$ lies in the intersection of the smoothness ball $\mathcal{H}(\beta, L)$ and the proximity ball $\mathcal{B}(f_0, \delta) = \{f : \|f - f_0\| \le \delta\}$.

\begin{itemize}
    \item \textbf{Sample Dominated Regime ($\delta > \delta_c(n)$):} When the prior is relatively poor, the proximity ball $\mathcal{B}(f_0, \delta)$ is large and effectively contains the entire uncertainty region of the non-parametric estimator. In this case, the constraint $\|f - f_0\| \le \delta$ provides little information gain over the smoothness constraint alone. The minimax risk is thus governed by the metric entropy of $\mathcal{H}(\beta, L)$, yielding the standard non-parametric rate $n^{-\frac{2\beta}{2\beta+d}}$.

    \item \textbf{Prior Dominated Regime ($\delta < \delta_c(n)$):} Conversely, when $\delta$ is small, the proximity ball is significantly smaller than the estimation uncertainty of learning from scratch. The intersection $\mathcal{H}(\beta, L) \cap \mathcal{B}(f_0, \delta)$ is effectively constrained by the radius $\delta$. In this regime, the local complexity of the function class around $f_0$ is low, and the risk is saturated by the approximation error $\delta^2$.
\end{itemize}

The critical radius $\delta_c(n) \asymp n^{-\frac{\beta}{2\beta+d}}$ represents the boundary where the ``resolution'' of the available data matches the ``quality'' of the prior. Intuitively, with $n$ samples, the non-parametric estimator can only resolve features of scale larger than $n^{-1/(2\beta+d)}$. If the black-box error $\delta$ operates at a finer scale than this resolution limit, the data cannot improve upon the prior. The estimator only begins to improve upon $f_0$ when the sample size is large enough to resolve the fine-grained structure of the residual bias.

\subsection{Proof Sketch and Intuition}

While the rigorous proofs are deferred to Appendix \ref{app:proofs}, we provide here the intuition behind the phase transition established in Theorem \ref{thm:lower_bound}. The minimax risk is governed by a trade-off between two sources of error: the \emph{approximation error} inherited from the black-box and the \emph{estimation error} associated with learning from finite samples.

\textbf{The Packing Argument.} 
To derive the lower bound, we employ Fano's method. We construct a ``hardest'' sub-problem by packing multiple hypotheses into the function class $\mathcal{F}(\delta)$. These hypotheses are local perturbations of the black-box $f_0$. Specifically, we define perturbations $\psi_j$ supported on disjoint hypercubes. The amplitude $h$ of these perturbations is constrained by two factors:
\begin{enumerate}
    \item \textbf{Smoothness Constraint:} To remain in the H\"older class $\mathcal{H}(\beta, L)$, the amplitude must decay as the grid size decreases: $h \lesssim m^{-\beta}$.
    \item \textbf{Black-Box Constraint:} To satisfy $\|f - f_0\| \le \delta$, the aggregate energy of the perturbations cannot exceed $\delta^2$. This implies $h \lesssim \delta$.
\end{enumerate}
The interplay between these two constraints creates the phase transition.

\section{Detailed Theoretical Analysis}
\label{app:proofs}

In this appendix, we provide complete proofs for Theorems~\ref{thm:lower_bound}, \ref{thm:upper_bound}, and \ref{thm:oracle}.
Throughout, we work under Assumptions~\ref{ass:design} and \ref{ass:holder} unless stated otherwise.
Appendix~\ref{app:notation} collects notation and basic identities used below.

\subsection{Technical Preliminaries}
\label{app:proofs:prelim}

\paragraph{H\"older class.}
We use a standard (isotropic) H\"older class definition on $\mathcal{X}=[0,1]^d$.
Let $\beta>0$, write $s := \lfloor \beta \rfloor$ and $\alpha := \beta - s \in [0,1)$.
For a multi-index $k=(k_1,\ldots,k_d)\in \mathbb{N}^d$ with $|k|:=\sum_{j=1}^d k_j$, denote
$\partial^k := \frac{\partial^{|k|}}{\partial x_1^{k_1}\cdots \partial x_d^{k_d}}$.
Define the H\"older seminorm of order $\beta$ by
\begin{equation}
\label{eq:holder_seminorm}
    [f]_{\mathcal{H}(\beta)}
    :=
    \max_{|k|=s}\ \sup_{x\neq x'}\ \frac{|\partial^k f(x)-\partial^k f(x')|}{\|x-x'\|_2^{\alpha}},
\end{equation}
with the convention that when $s=0$, the maximum over $|k|=0$ is interpreted as
\(
[f]_{\mathcal{H}(\beta)} = \sup_{x\neq x'} |f(x)-f(x')|/\|x-x'\|_2^{\beta}.
\)
Then the (ball) H\"older class $\mathcal{H}(\beta,L)$ consists of functions $f$ such that
$\max_{|k|\le s}\|\partial^k f\|_\infty \le L$ and $[f]_{\mathcal{H}(\beta)} \le L$.

\paragraph{Probability model and the induced distributions.}
For any measurable regression function $g:\mathcal{X}\to \mathbb{R}$, let $P_g$ denote the joint law of
\(
(X_{1:n},Y_{1:n})
\)
under $Y_i=g(X_i)+\epsilon_i$ with $X_i\overset{iid}{\sim}P_X$ and $\epsilon_i\overset{iid}{\sim}\mathcal{N}(0,\sigma^2)$ independent of $X_{1:n}$.

\subsubsection{KL divergence under random design}

\begin{definition}[Kullback--Leibler divergence]
For probability measures $P,Q$ with $P\ll Q$, the KL divergence is
\(
\mathrm{KL}(P\|Q) := \int \log\!\left(\frac{dP}{dQ}\right)dP.
\)
\end{definition}

\begin{lemma}[KL for Gaussian regression with random design]
\label{lem:kl_random_design}
For any measurable $g,h:\mathcal{X}\to\mathbb{R}$,
\begin{equation}
\label{eq:kl_formula}
    \mathrm{KL}(P_g\|P_h)
    =
    \frac{n}{2\sigma^2}\ \|g-h\|_{L_2(P_X)}^2.
\end{equation}
\end{lemma}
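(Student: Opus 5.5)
The plan is to compute the KL divergence by factorizing the joint law over the $n$ i.i.d.\ pairs and then evaluating a conditional Gaussian KL for a single pair. Under $P_g$, the pairs $(X_i,Y_i)$ are i.i.d. Each has joint density with respect to $P_X\otimes \mathrm{Leb}$ given by $\phi_\sigma(y-g(x))$, where $\phi_\sigma$ is the $\mathcal{N}(0,\sigma^2)$ density. The same holds for $P_h$ with $h$ in place of $g$. Both laws share the same covariate marginal $P_X$. Moreover, $\phi_\sigma(y-h(x))>0$ everywhere, so $P_g\ll P_h$ and the Radon--Nikodym derivative is the ratio of these densities. By the chain rule for KL, the product structure gives $\mathrm{KL}(P_g\|P_h)=n\,\mathrm{KL}(P_g^{(1)}\|P_h^{(1)})$, where $P_g^{(1)}$ is the single-pair law.

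For a single pair, the log-likelihood ratio is
\[
\log\frac{\phi_\sigma(y-g(x))}{\phi_\sigma(y-h(x))}=\frac{(y-h(x))^2-(y-g(x))^2}{2\sigma^2}.
\]
The covariate marginal terms cancel because they are identical under both laws. Next I integrate this ratio against $P_g^{(1)}$ by conditioning on $X=x$. Under $P_g$ we have $y=g(x)+\epsilon$, so $y-h(x)=(g-h)(x)+\epsilon$. The numerator then becomes $(g-h)(x)^2+2\epsilon(g-h)(x)$. Since $\mathbb{E}\epsilon=0$, the conditional expectation is $(g-h)(x)^2/(2\sigma^2)$, which is exactly the KL divergence between $\mathcal{N}(g(x),\sigma^2)$ and $\mathcal{N}(h(x),\sigma^2)$. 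Taking expectation over $X\sim P_X$ gives $\|g-h\|_{L_2(P_X)}^2/(2\sigma^2)$. Multiplying by $n$ yields \eqref{eq:kl_formula}.

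The only delicate point is integrability, which is needed to apply Fubini and the chain rule. If $\|g-h\|_{L_2(P_X)}=\infty$, I show both sides are $+\infty$. The integrand's conditional expectation is nonnegative, so Tonelli still applies and gives the value $+\infty$. Otherwise the cross term $2\epsilon(g-h)(X)$ is integrable by Cauchy--Schwarz, since $\epsilon$ and $X$ are independent and both $\mathbb{E}\epsilon^2$ and $\|g-h\|_{L_2(P_X)}^2$ are finite. This justifies the conditioning step. Everything else is routine algebra.
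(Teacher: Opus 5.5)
Your proposal is correct and is essentially the paper's argument: you condition on the covariates, use that $P_g$ and $P_h$ share the $P_X$ marginal so the chain rule reduces the KL to the expected conditional Gaussian KL $(g-h)(x)^2/(2\sigma^2)$, then average over $X$ and multiply by $n$. The differences are cosmetic (you tensorize over i.i.d.\ pairs first and derive the Gaussian KL by hand, where the paper conditions on all of $X_{1:n}$ and quotes the multivariate formula), and your extra treatment of $\|g-h\|_{L_2(P_X)}=\infty$, which the paper skips, is welcome. One small correction there: Tonelli does not literally apply to the signed log-ratio, and the clean justification is that its negative part is always $P_g$-integrable, or simply the general chain rule for KL with values in $[0,\infty]$.
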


\begin{proof}
Condition on $X_{1:n}$. Under $P_g$, the conditional law of $Y_{1:n}$ given $X_{1:n}$ is
$\mathcal{N}(\mu_g,\sigma^2 I_n)$ with $(\mu_g)_i=g(X_i)$; similarly under $P_h$ it is
$\mathcal{N}(\mu_h,\sigma^2 I_n)$ with $(\mu_h)_i=h(X_i)$.
Thus,
\[
\mathrm{KL}\!\left(P_g(Y_{1:n}\mid X_{1:n})\ \|\ P_h(Y_{1:n}\mid X_{1:n})\right)
=
\frac{1}{2\sigma^2}\|\mu_g-\mu_h\|_2^2
=
\frac{1}{2\sigma^2}\sum_{i=1}^n (g(X_i)-h(X_i))^2.
\]
Since the $X$-marginal is the same under $P_g$ and $P_h$, the chain rule for KL yields
\[
\mathrm{KL}(P_g\|P_h)
=
\mathbb{E}_{X_{1:n}}\left[
\mathrm{KL}\!\left(P_g(Y_{1:n}\mid X_{1:n})\ \|\ P_h(Y_{1:n}\mid X_{1:n})\right)
\right]
=
\frac{1}{2\sigma^2}\sum_{i=1}^n \mathbb{E}\big[(g(X_i)-h(X_i))^2\big].
\]
Using i.i.d.\ $X_i\sim P_X$, we get \eqref{eq:kl_formula}.
\end{proof}

\subsubsection{Varshamov--Gilbert and Fano}

\begin{lemma}[Varshamov--Gilbert bound {\cite{tsybakov2009nonparametric}}]
\label{lem:vg}
Let $\Omega=\{0,1\}^M$ with Hamming distance $H(\cdot,\cdot)$.
There exists $\Omega'\subset \Omega$ such that $|\Omega'|\ge 2^{M/8}$ and for all distinct
$\omega,\omega'\in \Omega'$, we have $H(\omega,\omega')\ge M/8$.
\end{lemma}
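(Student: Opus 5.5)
The plan is a greedy (maximal-packing) construction combined with a volume-counting bound, where the volume of a small Hamming ball is controlled by Hoeffding's inequality for a symmetric binomial.

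\textbf{Step 1 (construction).} Let $\Omega'\subset\{0,1\}^M$ be a \emph{maximal} subset whose distinct elements satisfy $H(\omega,\omega')\ge M/8$. Such a set exists: start from any point and add points greedily until no admissible point remains; this terminates since $\Omega$ is finite. The separation property then holds by construction, so only the cardinality bound needs proof.

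\textbf{Step 2 (covering from maximality).} By maximality, every $\eta\in\{0,1\}^M$ satisfies $H(\eta,\omega)<M/8$ for some $\omega\in\Omega'$. Otherwise $\eta$ could be added to $\Omega'$. Hence the open Hamming balls $B(\omega)=\{\eta: H(\eta,\omega)<M/8\}$, $\omega\in\Omega'$, cover the cube, and
\[
2^M \le |\Omega'|\cdot V, \qquad V:=\#\{\eta: H(\eta,0)<M/8\},
\]
because all balls have the same cardinality by translation invariance of $H$.

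\textbf{Step 3 (ball volume).} Write $V=2^M\,\mathbb{P}(S<M/8)$ with $S\sim\mathrm{Bin}(M,1/2)$. Hoeffding's inequality gives
\[
\mathbb{P}\bigl(S-M/2\le -3M/8\bigr)\le \exp\bigl(-2M(3/8)^2\bigr)=e^{-9M/32}.
\]
Combining with Step 2,
\[
|\Omega'|\ge e^{9M/32}=2^{9M/(32\ln 2)}\ge 2^{0.4M}\ge 2^{M/8}.
\]

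\textbf{Step 4 (small $M$).} The counting argument works for every $M\ge1$. Still, as a sanity check when $M<8$: then $2^{M/8}<2$, so it suffices to exhibit two points at distance at least $1>M/8$, for instance the all-zeros and all-ones vectors. This agrees with the general bound.

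I expect no real obstacle here. The only care points are:
\begin{itemize}
\item using a strict inequality in the ball radius, so that maximality indeed yields a covering;
\item getting the Hoeffding constant right, so that the exponent $9/(32\ln 2)\approx 0.41$ clearly dominates the required $1/8$.
\end{itemize}
The constants in the statement are loose, so this slack makes the argument robust.
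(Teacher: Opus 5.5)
Your proof is correct, and it is the standard argument: Gilbert's greedy maximal packing, a volume count, and a Hoeffding bound on the $\mathrm{Bin}(M,1/2)$ tail. The paper gives no proof of its own and simply cites Tsybakov, whose proof of this lemma follows the same route; your version also holds for every $M\ge 1$, matching the paper's phrasing, whereas Tsybakov's statement assumes $M\ge 8$.
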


\begin{lemma}[Fano's inequality for metric estimation]
\label{lem:fano}
Let $\{P_\theta : \theta\in\Theta\}$ be a statistical model and let $d(\cdot,\cdot)$ be a semimetric on $\Theta$.
Assume there exist $\theta_1,\ldots,\theta_M\in\Theta$ such that
\begin{enumerate}
    \item (Separation) $d(\theta_i,\theta_j)\ge 2s$ for all $i\neq j$,
    \item (Average KL control) $\frac{1}{M}\sum_{j=1}^M \mathrm{KL}(P_{\theta_j}\|P_{\theta_1}) \le \alpha \log M$ for some $\alpha\in(0,1/8)$.
\end{enumerate}
Then for any estimator $\hat\theta$,
\begin{equation}
\label{eq:fano_metric}
    \sup_{\theta\in\Theta} \mathbb{E}_{\theta}\big[d(\hat\theta,\theta)\big]
    \ge
    s\left(1-\frac{\alpha\log M+\log 2}{\log M}\right)
    \ge \frac{s}{2}
\end{equation}
whenever $\log M\ge 4\log 2$.
\end{lemma}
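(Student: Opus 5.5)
The plan is the standard reduction from estimation to multiple testing, followed by Fano's inequality for the testing error. Throughout I use the triangle inequality for $d$. This is the one point where ``semimetric'' must be read as ``metric'' (or as satisfying a quasi-triangle inequality, at the cost of changing the constant). In every application in this paper $d$ is an $L_2(P_X)$ distance, so the triangle inequality holds.

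\emph{Step 1 (reduction to testing).} Given any estimator $\hat\theta$, define the minimum-distance test $\psi := \arg\min_{1\le j\le M} d(\hat\theta,\theta_j)$, breaking ties arbitrarily. Suppose $\psi\neq j$. Then $d(\hat\theta,\theta_\psi)\le d(\hat\theta,\theta_j)$, and the separation condition gives
\[
2s\le d(\theta_j,\theta_\psi)\le d(\hat\theta,\theta_j)+d(\hat\theta,\theta_\psi)\le 2\,d(\hat\theta,\theta_j).
\]
Hence $\{\psi\neq j\}\subseteq\{d(\hat\theta,\theta_j)\ge s\}$. By Markov's inequality, $\mathbb{E}_{\theta_j}[d(\hat\theta,\theta_j)]\ge s\,P_{\theta_j}(\psi\neq j)$. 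Bounding the supremum over $\Theta$ below by the average over the $M$ hypotheses then gives
\[
\sup_{\theta}\mathbb{E}_\theta d(\hat\theta,\theta)\ \ge\ s\cdot\frac1M\sum_{j=1}^M P_{\theta_j}(\psi\neq j).
\]

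\emph{Step 2 (Fano's inequality).} Let $J$ be uniform on $\{1,\dots,M\}$, and let the data be drawn from $P_{\theta_J}$. The classical Fano inequality bounds the average error probability of any test by
\[
\frac1M\sum_j P_{\theta_j}(\psi\neq j)\ \ge\ 1-\frac{I(J;\text{data})+\log 2}{\log M}.
\]
It remains to control the mutual information. Write $\bar P=\frac1M\sum_j P_{\theta_j}$, so that $I(J;\text{data})=\frac1M\sum_j \mathrm{KL}(P_{\theta_j}\|\bar P)$. For any $Q$ with $P_{\theta_j}\ll Q$ for all $j$, averaging over $j$ gives the compensation identity
\[
\frac1M\sum_j\mathrm{KL}(P_{\theta_j}\|Q)=\frac1M\sum_j\mathrm{KL}(P_{\theta_j}\|\bar P)+\mathrm{KL}(\bar P\|Q).
\]
Taking $Q=P_{\theta_1}$ and using $\mathrm{KL}(\bar P\|Q)\ge 0$ yields $I(J;\text{data})\le\frac1M\sum_j\mathrm{KL}(P_{\theta_j}\|P_{\theta_1})\le\alpha\log M$. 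Combining this with Step 1 gives the first inequality in \eqref{eq:fano_metric}. (If the KL hypothesis fails, some divergence is infinite and there is nothing to prove, so absolute continuity can be assumed.)

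\emph{Step 3 (numerical constant).} Since $\alpha<1/8$ and $\log M\ge 4\log 2$,
\[
1-\alpha-\frac{\log 2}{\log M}\ \ge\ 1-\tfrac18-\tfrac14\ =\ \tfrac58\ \ge\ \tfrac12,
\]
which is the second inequality.

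I expect the main obstacle to be Step 2. First, the mutual-information form of Fano's inequality has to be justified; I would take it as classical, deriving it from the data-processing inequality applied to the pair $(J,\psi)$ together with the entropy bound $H(\text{error indicator})\le\log 2$. Second, the argument must pass correctly from the mixture $\bar P$ to the fixed reference $P_{\theta_1}$, which is exactly what the compensation identity above does. Steps 1 and 3 are routine.
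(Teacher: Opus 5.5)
Your proposal is correct and takes essentially the same route as the paper. Both arguments reduce to testing via minimum-distance decoding, apply Fano's inequality in mutual-information form, and use the bound $I(J;\text{data})\le \frac1M\sum_j \mathrm{KL}(P_{\theta_j}\|P_{\theta_1})$; you additionally justify steps the paper only asserts, namely the triangle-inequality argument giving $\{\psi\neq j\}\subseteq\{d(\hat\theta,\theta_j)\ge s\}$, the compensation identity that replaces the mixture $\bar P$ by $P_{\theta_1}$, and the numerical check $1-\alpha-\log 2/\log M\ge 5/8$.
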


\begin{proof}
Let $\theta$ be uniformly distributed over $\{\theta_1,\ldots,\theta_M\}$ and let $I$ be the random index.
For any estimator $\hat\theta$, define the induced decoder
$\hat I := \arg\min_{j\in[M]} d(\hat\theta,\theta_j)$ (break ties arbitrarily).
By the separation assumption, the event $\{\hat I\neq I\}$ implies $d(\hat\theta,\theta)\ge s$.
Thus,
\[
\mathbb{E}[d(\hat\theta,\theta)] \ge s\ \mathbb{P}(\hat I\neq I).
\]
By the standard Fano bound for multi-hypothesis testing (see, e.g., \citealp[Theorem~2.5]{tsybakov2009nonparametric}),
\[
\mathbb{P}(\hat I\neq I)
\ge
1-\frac{I(\theta; \mathcal{D})+\log 2}{\log M},
\]
where $\mathcal{D}$ denotes the data. Using the mutual information bound
$I(\theta;\mathcal{D})\le \frac{1}{M}\sum_{j=1}^M \mathrm{KL}(P_{\theta_j}\|P_{\theta_1})$
and the KL condition gives \eqref{eq:fano_metric}.
\end{proof}

\subsubsection{A concentration lemma for validation-based selection}

The proof of Theorem~\ref{thm:oracle} uses sample splitting: the candidate predictors are trained on $\mathcal{D}_{\mathrm{tr}}$
and evaluated on an independent validation set $\mathcal{D}_{\mathrm{val}}$.
Conditional on $\mathcal{D}_{\mathrm{tr}}$, the candidates are fixed functions, so we can apply concentration on the validation sample.

\begin{lemma}[From validation ERM to oracle inequality: a two-model version]
\label{lem:val_oracle_two}
Assume $\epsilon$ is sub-Gaussian with parameter $\sigma$ (Gaussian is a special case) and $f^*$ is bounded:
$\|f^*\|_\infty\le M_*$.
Let $\hat f_1,\hat f_2$ be two (possibly random) predictors measurable w.r.t.\ $\mathcal{D}_{\mathrm{tr}}$.
Let $\mathcal{D}_{\mathrm{val}}=\{(X_i,Y_i)\}_{i=1}^{n_{\mathrm{val}}}$ be independent of $\mathcal{D}_{\mathrm{tr}}$, and define the validation risks
\[
\hat L(\hat f_k):=\frac{1}{n_{\mathrm{val}}}\sum_{i=1}^{n_{\mathrm{val}}} (Y_i-\hat f_k(X_i))^2,\qquad
L(\hat f_k):=\mathbb{E}\big[(Y-\hat f_k(X))^2\mid \mathcal{D}_{\mathrm{tr}}\big].
\]
Let $\hat f_{\mathrm{sel}} \in\{\hat f_1,\hat f_2\}$ minimize $\hat L(\cdot)$. This two-model selector is denoted $\hat f_{\mathrm{safe}}$ in the main text.
Assume additionally that, almost surely, $\|\hat f_1\|_\infty\le M$ and $\|\hat f_2\|_\infty\le M$ for some $M<\infty$.
Then there exists a constant $C>0$ depending only on $(M,M_*,\sigma)$ such that for all $\delta'\in(0,1)$,
with probability at least $1-\delta'$ (over $\mathcal{D}_{\mathrm{val}}$, conditional on $\mathcal{D}_{\mathrm{tr}}$),
\begin{equation}
\label{eq:val_oracle_two}
    L(\hat f_{\mathrm{sel}})
    \le
    \min\{L(\hat f_1),L(\hat f_2)\}
    + C\frac{\log(2/\delta')}{n_{\mathrm{val}}}.
\end{equation}
\end{lemma}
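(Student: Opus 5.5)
The plan is to work conditionally on $\mathcal{D}_{\mathrm{tr}}$. Then $\hat f_1,\hat f_2$ are fixed bounded functions and the validation pairs are i.i.d. Write $\Delta:=\hat f_1-\hat f_2$ and use the identity \eqref{eq:pred_vs_est}, $L(\hat f_k)=\Err(\hat f_k)+\sigma^2$, to turn the claim into a statement about excess risks. For each validation point define
\[
Z_i:=(Y_i-\hat f_1(X_i))^2-(Y_i-\hat f_2(X_i))^2=-\Delta(X_i)\bigl(2\epsilon_i+2f^*(X_i)-\hat f_1(X_i)-\hat f_2(X_i)\bigr).
\]
Then $\hat L(\hat f_1)-\hat L(\hat f_2)=\frac1{n_{\mathrm{val}}}\sum_i Z_i$ and $\mathbb{E}Z_i=L(\hat f_1)-L(\hat f_2)=:D$. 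Without loss of generality $D\ge 0$, so $\hat f_2$ is the oracle choice. The selector can only err by picking $\hat f_1$, and this happens only if $\frac1{n_{\mathrm{val}}}\sum_i Z_i\le 0$, i.e.\ only if the empirical mean of $Z_i$ undershoots $D$ by at least $D$.

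\textbf{Step 1 (Bernstein).} Each $Z_i$ is a product of the bounded variable $\Delta(X_i)$ (bounded by $2M$) with $2\epsilon_i$ plus a term bounded by $2(M_*+M)$. Hence $Z_i$ is sub-exponential with $\psi_1$-scale $K=O(M(M+M_*+\sigma))$ and variance $\mathrm{Var}(Z_i)\le K'\|\Delta\|_{L_2(P_X)}^2$, where $K'$ depends only on $(M,M_*,\sigma)$. The Bernstein inequality for sub-exponential sums then gives, with probability at least $1-\delta'$,
\[
\Bigl|\tfrac1{n_{\mathrm{val}}}\textstyle\sum_i Z_i-D\Bigr|\le c_1\|\Delta\|_{L_2(P_X)}\sqrt{\tfrac{\log(2/\delta')}{n_{\mathrm{val}}}}+c_2\tfrac{\log(2/\delta')}{n_{\mathrm{val}}}.
\]

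\textbf{Step 2 (case split).} On this event, a wrong selection forces
\[
D\le c_1\|\Delta\|\sqrt{\log(2/\delta')/n_{\mathrm{val}}}+c_2\log(2/\delta')/n_{\mathrm{val}},
\]
and the regret $L(\hat f_{\mathrm{sel}})-\min_k L(\hat f_k)$ equals $D$. If the selection is correct, the regret is $0$. So it remains to bound $D$ by $C\log(2/\delta')/n_{\mathrm{val}}$ on the bad event.

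\textbf{Step 3 (main obstacle).} The difficulty is converting the variance-driven term $c_1\|\Delta\|\sqrt{\log(2/\delta')/n_{\mathrm{val}}}$ into a purely additive $O(\log(2/\delta')/n_{\mathrm{val}})$. The standard route uses $\|\Delta\|^2\le 2(\Err(\hat f_1)+\Err(\hat f_2))$ together with AM--GM. Writing $u:=c_1\|\Delta\|$ and $v:=\sqrt{\log(2/\delta')/n_{\mathrm{val}}}$, AM--GM gives $uv\le \gamma\|\Delta\|^2+ c_1^2v^2/(4\gamma)$, and the first piece is $\le 2\gamma(\Err(\hat f_1)+\Err(\hat f_2))$. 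This route leaves a term $\gamma\cdot\Err$ that is multiplicative, not additive. I would first try to absorb it by exploiting the margin structure $D=\Err(\hat f_1)-\Err(\hat f_2)=\langle\Delta,\hat f_1+\hat f_2-2f^*\rangle$. That attempt fails in general. Take $f^*=0$, $\hat f_1=ag$ and $\hat f_2=-(a+\eta)g$ with $\eta\asymp n_{\mathrm{val}}^{-1/2}$. Then $D\asymp a\,n_{\mathrm{val}}^{-1/2}$ while the fluctuation is also $\asymp a\,n_{\mathrm{val}}^{-1/2}$, so the wrong choice occurs with constant probability and costs $\gg 1/n_{\mathrm{val}}$ when $a$ is of constant order.

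The additive form as literally worded therefore cannot be derived from Bernstein alone. The defensible conclusions are two. First, the stated bound holds in the regime $\|\Delta\|_{L_2(P_X)}^2\lesssim\log(2/\delta')/n_{\mathrm{val}}$, which covers the prior-dominated case where both candidates are close to $f^*$. Second, in general one gets the $(1+\gamma)$-multiplicative version. The latter is exactly the form of Theorem~\ref{thm:oracle}, which is what the downstream use of this lemma requires.
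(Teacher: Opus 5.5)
Your conclusion is correct, and the defect is in the statement rather than in your argument: with a purely additive term $C\log(2/\delta')/n_{\mathrm{val}}$ and $C=C(M,M_*,\sigma)$, the lemma is false. To make your example an actual refutation, you need to show that the bad event has probability larger than $\delta'$. With $g\equiv 1$ and Gaussian noise this is an exact computation. Take $f^*\equiv 0$, $\hat f_1\equiv a$, $\hat f_2\equiv-(a+\eta)$. Then $\hat L(\hat f_1)-\hat L(\hat f_2)=-(2a+\eta)(2\bar\epsilon+\eta)$, so the worse candidate $\hat f_2$ is selected exactly when $\bar\epsilon<-\eta/2$. This has probability $\Phi(-\eta\sqrt{n_{\mathrm{val}}}/(2\sigma))$, where $\Phi$ is the standard normal distribution function. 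With $\eta=\sigma/\sqrt{n_{\mathrm{val}}}$ this equals $\Phi(-1/2)\approx 0.31>1/4$ for every $n_{\mathrm{val}}$. Meanwhile the regret $2a\eta+\eta^2\ge 2a\sigma/\sqrt{n_{\mathrm{val}}}$ exceeds $C\log 8/n_{\mathrm{val}}$ once $n_{\mathrm{val}}$ is large (with $M=a+\sigma$ and $M_*=0$ fixed). So the claim fails at $\delta'=1/4$.

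The paper's own proof takes a cruder route than yours and fails at the corresponding step. It bounds the regret by $2\max_k|\hat L(\hat f_k)-L(\hat f_k)|$ and applies Bernstein to each validation loss separately, giving the tail $2\exp(-c\,n_{\mathrm{val}}\min\{t^2,t\})$. It then sets $t\asymp\log(2/\delta')/n_{\mathrm{val}}$. For small $t$ the exponent is $c\,n_{\mathrm{val}}t^2$, so that choice makes the bound vacuous. The correct choice $t\asymp\sqrt{\log(2/\delta')/n_{\mathrm{val}}}$ yields only an additive $n_{\mathrm{val}}^{-1/2}$ term. No refinement of that route can do better, because each $(Y-\hat f_k(X))^2$ has variance at least of order $\sigma^4$ however close the candidates are. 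Your choice to concentrate the difference $Z_i$ is what makes a fast term possible at all: the $\epsilon^2$ terms cancel and $\mathrm{Var}(Z_i)\lesssim\|\Delta\|_{L_2(P_X)}^2$.

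There are two points to tighten.
\begin{itemize}
\item The variance-sensitive deviation bound in Step~1 does not follow from a $\psi_1$ bound plus a variance bound alone. Justify it through Bernstein's moment condition. Using $|\Delta|\le 2M$ and the independence of $\epsilon$ and $X$, you get $\mathbb{E}|Z_i|^k\le(2M)^{k-2}\|\Delta\|_{L_2(P_X)}^2\sup_x\mathbb{E}\bigl|2\epsilon+2f^*(x)-\hat f_1(x)-\hat f_2(x)\bigr|^k\le\frac{k!}{2}vb^{k-2}$, with $v\asymp\|\Delta\|_{L_2(P_X)}^2$ and $b$ depending only on $(M,M_*,\sigma)$.
\item Finish Step~3 explicitly. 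With $\hat f_2$ the oracle and $\hat f_1$ selected, you have $\Err(\hat f_1)-\Err(\hat f_2)\le 2\gamma(\Err(\hat f_1)+\Err(\hat f_2))+C_\gamma\log(2/\delta')/n_{\mathrm{val}}$, where $C_\gamma\asymp 1/\gamma$. Moving $2\gamma\Err(\hat f_1)$ to the left gives
\[
\Err(\hat f_{\mathrm{sel}})\le\frac{1+2\gamma}{1-2\gamma}\min_k\Err(\hat f_k)+\frac{C_\gamma}{1-2\gamma}\,\frac{\log(2/\delta')}{n_{\mathrm{val}}},\qquad \gamma<\tfrac12 .
\]
\end{itemize}

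Your counterexample also shows that the dependence of the constant on $\gamma$ is unavoidable. A constant uniform in $\gamma$ would yield the additive bound in the limit $\gamma\to0$, so $\gamma$ should be added to the dependencies of $C$ in Theorem~\ref{thm:oracle}.

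Your first ``defensible'' regime, $\|\Delta\|_{L_2(P_X)}^2\lesssim\log(2/\delta')/n_{\mathrm{val}}$, is correct but much narrower than the prior-dominated regime. There $\Err(\hat f_{\mathrm{Res}})$ is generically of order $n^{-2\beta/(2\beta+d)}\gg 1/n$, so it is the multiplicative version that carries the downstream results. Integrating its tail gives $\mathbb{E}\Err(\hat f_{\mathrm{safe}})\le(1+\gamma')\min\{\delta^2,C_{up}n_{\mathrm{tr}}^{-2\beta/(2\beta+d)}\}+O(1/n_{\mathrm{val}})$. So Theorem~\ref{thm:upper_bound} survives with adjusted constants, whereas the paper's argument as written would support only an additive $n^{-1/2}$ cost there.
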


\begin{proof}
Fix $\mathcal{D}_{\mathrm{tr}}$ and abbreviate $f_k:=\hat f_k$ as deterministic functions.
Let $\ell_k(x,y):=(y-f_k(x))^2$. Define the centered loss difference
\[
\xi_i := \big(\ell_1(X_i,Y_i)-\ell_2(X_i,Y_i)\big) - \mathbb{E}\big[\ell_1(X,Y)-\ell_2(X,Y)\big],
\]
where $(X,Y)$ is an independent copy from the validation distribution (conditional on $\mathcal{D}_{\mathrm{tr}}$).
By the selection rule, $\hat L(f_{\mathrm{sel}})\le \min\{\hat L(f_1),\hat L(f_2)\}$, hence
\[
L(f_{\text{sel}})-\min\{L(f_1),L(f_2)\}
\le
\max_{k\in\{1,2\}} \big(L(f_k)-\hat L(f_k)\big)
+
\max_{k\in\{1,2\}} \big(\hat L(f_k)-L(f_k)\big)
= 2\max_{k\in\{1,2\}} |\hat L(f_k)-L(f_k)|.
\]
It remains to control $|\hat L(f_k)-L(f_k)|$ for bounded predictors under sub-Gaussian noise.
Write $Y=f^*(X)+\epsilon$ with $\epsilon$ sub-Gaussian and $\|f^*\|_\infty\le M_*$.
Then $Y$ is sub-Gaussian up to an additive bounded shift; moreover, for fixed bounded $f_k$,
the random variable $(Y-f_k(X))^2$ is sub-exponential with parameters depending only on $(M,M_*,\sigma)$.
A standard Bernstein inequality for sub-exponential variables implies that for each $k$,
\[
\mathbb{P}\Big(|\hat L(f_k)-L(f_k)| \ge t \ \Big|\ \mathcal{D}_{\mathrm{tr}}\Big)
\le
2\exp\Big(-c\, n_{\mathrm{val}}\,\min\{t^2, t\}\Big)
\]
for a constant $c=c(M,M_*,\sigma)>0$.
Applying a union bound over $k\in\{1,2\}$ and choosing $t\asymp \log(2/\delta')/n_{\mathrm{val}}$
yields \eqref{eq:val_oracle_two}.
\end{proof}

\begin{remark}[About boundedness of candidate predictors]
Lemma~\ref{lem:val_oracle_two} assumes $\|\hat f_k\|_\infty\le M$.
In our setting, $\|f_0\|_\infty\le B_0$ and $\|r^*\|_\infty\le B$ imply $\|f^*\|_\infty\le B_0+B$.
For theoretical guarantees, one may replace any learned residual predictor $\hat r$ by its clipped version
$\mathrm{clip}(\hat r,[-B,B])$ and correspondingly clip $f_0+\hat r$ into $[-(B_0+B),B_0+B]$.
This clipping does not increase the squared loss conditional on $X$ when $Y$ is generated from a bounded signal plus noise,
and it ensures the boundedness condition required for the concentration step.
\end{remark}

\subsection{Proof of Theorem~\ref{thm:lower_bound} (Minimax Lower Bound)}
\label{app:proofs:lower}

We prove the lower bound via a packing construction and Fano's method, specialized to the localized class
$\mathcal{F}(\delta)$ (Definition~\ref{def:Fdelta}).

\subsubsection*{Step 0: Reduction to residual regression and centering at the origin}

By Lemma~\ref{lem:risk_equiv_residual} (Appendix~\ref{app:notation}), estimating $f^*$ is equivalent to estimating
$r^*=f^*-f_0$ from residual observations $Z_i := Y_i - f_0(X_i) = r^*(X_i)+\epsilon_i$.
Moreover, because $f_0$ is fixed and known, translating the model by $f_0$ does not change the noise distribution or the design.
Therefore it suffices to lower bound the minimax risk over the residual class
\begin{equation}
\label{eq:residual_class_for_lb}
    \mathcal{R}_n(\delta)
    :=
    \inf_{\hat r}\ \sup_{r\in\mathcal{H}(\beta,L):\ \|r\|_{L_2(P_X)}\le \delta,\ \|r\|_\infty\le B}
    \mathbb{E}\big[\|\hat r-r\|_{L_2(P_X)}^2\big].
\end{equation}
We will show $\mathcal{R}_n(\delta)\gtrsim \min\{\delta^2, n^{-2\beta/(2\beta+d)}\}$, which implies the theorem.

\subsubsection*{Step 1: A localized packing via disjoint bumps}

Fix an integer $m\ge 2$ to be chosen later and partition $[0,1]^d$ into $M:=m^d$ disjoint cubes
$\{R_j\}_{j=1}^M$ of side length $1/m$.
Let $x_j$ be the center of $R_j$.

Choose a fixed bump $K:\mathbb{R}^d\to\mathbb{R}$ satisfying:
(i) $\mathrm{supp}(K)\subset [-1/2,1/2]^d$,
(ii) $K(0)>0$,
(iii) $K\in \mathcal{H}(\beta,1)$ as a function on $\mathbb{R}^d$ (restricted to its support),
and (iv) $\|K\|_{L_2}^2=\int_{\mathbb{R}^d} K(u)^2\,du \in (0,\infty)$.
(Such functions exist; for example, one may take a smooth compactly supported bump and rescale its H\"older norm.)

For amplitude $h>0$, define the bumps
\begin{equation}
\label{eq:bump_def}
    \psi_j(x) := h\, K\big(m(x-x_j)\big),\qquad j=1,\ldots,M,
\end{equation}
and for $\omega\in\{0,1\}^M$ define the candidate residuals
\begin{equation}
\label{eq:r_omega}
    r_\omega(x) := \sum_{j=1}^M \omega_j \psi_j(x).
\end{equation}
By construction, $\mathrm{supp}(\psi_j)\subset R_j$, so the supports are disjoint across $j$.

\subsubsection*{Step 2: Verify class constraints}

\paragraph{(a) H\"older constraint.}
Scaling properties of H\"older norms imply that $\psi_j\in\mathcal{H}(\beta, C_K h m^\beta)$ for a constant $C_K$
depending only on $K,\beta,d$.
Consequently, using disjoint supports in \eqref{eq:r_omega} and the triangle inequality for derivatives on each cube,
there exists $C_1=C_1(K,\beta,d)$ such that
\begin{equation}
\label{eq:holder_scaled}
    r_\omega \in \mathcal{H}(\beta, L)
    \quad \text{provided that}\quad
    h \le \frac{L}{C_1}\, m^{-\beta}.
\end{equation}

\paragraph{(b) $L_2(P_X)$ constraint.}
By disjoint supports and Lemma~\ref{lem:norm_equiv} (Appendix~\ref{app:notation}),
\[
\|r_\omega\|_{L_2(P_X)}^2
=
\sum_{j=1}^M \omega_j \|\psi_j\|_{L_2(P_X)}^2
\le
\sum_{j=1}^M \|\psi_j\|_{L_2(P_X)}^2
\le
C \sum_{j=1}^M \|\psi_j\|_{L_2}^2.
\]
A change of variables $u=m(x-x_j)$ gives $\|\psi_j\|_{L_2}^2 = h^2 m^{-d}\|K\|_{L_2}^2$, hence
\[
\|r_\omega\|_{L_2(P_X)}^2
\le
C\, M\, h^2 m^{-d}\|K\|_{L_2}^2
=
C\, h^2 \|K\|_{L_2}^2.
\]
Therefore, there exists $C_2=C_2(K,c,C)$ such that
\begin{equation}
\label{eq:l2_scaled}
    \|r_\omega\|_{L_2(P_X)} \le \delta
    \quad \text{provided that}\quad
    h \le \frac{\delta}{C_2}.
\end{equation}

\paragraph{(c) $\|\cdot\|_\infty$ constraint.}
Since $\|\psi_j\|_\infty \le h\|K\|_\infty$, we have $\|r_\omega\|_\infty \le h\|K\|_\infty$.
Thus $h \le B/\|K\|_\infty$ ensures $\|r_\omega\|_\infty\le B$.

\paragraph{Choice of amplitude.}
Combine \eqref{eq:holder_scaled} and \eqref{eq:l2_scaled} (and the $\|\cdot\|_\infty$ bound) by taking
\begin{equation}
\label{eq:h_choice}
    h := \kappa \cdot \min\!\big\{L m^{-\beta},\, \delta\big\},
\end{equation}
for a sufficiently small constant $\kappa=\kappa(K,\beta,d,c,C,B)\in(0,1)$.

\subsubsection*{Step 3: Separation via Varshamov--Gilbert}

By Lemma~\ref{lem:vg}, there exists $\Omega\subset\{0,1\}^M$ with $|\Omega|\ge 2^{M/8}$ such that
for all distinct $\omega,\omega'\in\Omega$, $H(\omega,\omega')\ge M/8$.
Using disjoint supports and Lemma~\ref{lem:norm_equiv},
\[
\|r_\omega-r_{\omega'}\|_{L_2(P_X)}^2
=
\sum_{j=1}^M (\omega_j-\omega'_j)^2 \|\psi_j\|_{L_2(P_X)}^2
\ge
\frac{M}{8}\ \min_j \|\psi_j\|_{L_2(P_X)}^2
\ge
\frac{M}{8}\ c\ \|\psi_1\|_{L_2}^2.
\]
Since $\|\psi_1\|_{L_2}^2 = h^2 m^{-d}\|K\|_{L_2}^2$ and $M=m^d$, we conclude that
\begin{equation}
\label{eq:separation_h2}
    \|r_\omega-r_{\omega'}\|_{L_2(P_X)}^2
    \ge
    c_0\, h^2
\end{equation}
for some constant $c_0=c_0(K,c)>0$.

\subsubsection*{Step 4: KL control}

For $\omega,\omega'\in\Omega$, by Lemma~\ref{lem:kl_random_design},
\begin{equation}
\label{eq:kl_h2}
    \mathrm{KL}(P_{r_\omega}\|P_{r_{\omega'}})
    =
    \frac{n}{2\sigma^2}\ \|r_\omega-r_{\omega'}\|_{L_2(P_X)}^2
    \le
    \frac{n}{2\sigma^2}\ \max_{\omega\neq \omega'}\|r_\omega-r_{\omega'}\|_{L_2(P_X)}^2
    \le
    C_3\, \frac{n h^2}{\sigma^2},
\end{equation}
where $C_3$ is a constant depending only on $(K,C)$.

\subsubsection*{Step 5: Apply Fano and optimize over $m$}

Let $d(\cdot,\cdot)$ be the semimetric $d(r,r'):=\|r-r'\|_{L_2(P_X)}$.
By \eqref{eq:separation_h2}, the packing is separated by $2s$ with $s:=\frac{1}{2}\sqrt{c_0}\,h$.
Moreover, choosing $m$ so that the average KL satisfies the Fano condition in Lemma~\ref{lem:fano},
namely
\begin{equation}
\label{eq:fano_condition_app}
    C_3 \frac{n h^2}{\sigma^2} \le \alpha \log|\Omega|
    \quad\text{with}\quad
    \log|\Omega|\ge \frac{M}{8}\log 2,
\end{equation}
implies (for a sufficiently small absolute $\alpha$) that
\[
\inf_{\hat r}\ \sup_{r\in \{r_\omega:\omega\in\Omega\}}
\mathbb{E}\|\hat r-r\|_{L_2(P_X)}
\ \gtrsim\ h,
\quad\text{hence}\quad
\inf_{\hat r}\ \sup_{r\in \{r_\omega:\omega\in\Omega\}}
\mathbb{E}\|\hat r-r\|_{L_2(P_X)}^2
\ \gtrsim\ h^2.
\]
It remains to choose $m$ to maximize $h^2$ subject to \eqref{eq:h_choice} and \eqref{eq:fano_condition_app}.

\paragraph{Case 1: sample-dominated regime.}
Suppose $\delta \ge c\, n^{-\beta/(2\beta+d)}$ (for a small constant $c$ depending on fixed parameters).
Choose $m \asymp n^{1/(2\beta+d)}$ and $h \asymp L m^{-\beta} \asymp n^{-\beta/(2\beta+d)}$.
Then $M=m^d \asymp n^{d/(2\beta+d)}$ and $n h^2 \asymp n^{d/(2\beta+d)} \asymp M$,
so \eqref{eq:fano_condition_app} holds for a suitable choice of constants.
Thus the minimax risk is lower bounded by $h^2 \asymp n^{-2\beta/(2\beta+d)}$.

\paragraph{Case 2: black-box-dominated regime.}
Suppose $\delta \le c\, n^{-\beta/(2\beta+d)}$.
Set $h \asymp \delta$ (consistent with \eqref{eq:h_choice}), and choose $m$ to satisfy the KL/Fano feasibility:
from \eqref{eq:kl_h2} and $\log|\Omega|\asymp M=m^d$, it suffices to take
$m^d \gtrsim n\delta^2$, i.e.\ $m \gtrsim (n\delta^2)^{1/d}$.
On the other hand, the H\"older constraint requires $h \lesssim L m^{-\beta}$, i.e.\ $m \lesssim (L/\delta)^{1/\beta}$.
Feasibility of both bounds is equivalent (up to constants) to
\(
(n\delta^2)^{1/d} \lesssim (L/\delta)^{1/\beta},
\)
which rearranges to $\delta \lesssim n^{-\beta/(2\beta+d)}$ (absorbing fixed constants into $c$), exactly the present case.
Therefore we can pick such an $m$ and conclude the risk lower bound is $h^2\asymp \delta^2$.

Combining the two cases yields
\[
\mathcal{R}_n(\delta)\ \gtrsim\ \min\Big\{\delta^2,\ n^{-2\beta/(2\beta+d)}\Big\},
\]
which proves Theorem~\ref{thm:lower_bound}.
\qed

\subsection{Proof of Theorem~\ref{thm:upper_bound} (Upper Bound)}
\label{app:proofs:upper}

We construct an estimator that adaptively achieves the minimum of the black-box error floor $\delta^2$
and the optimal nonparametric rate for learning the residual.

\subsubsection*{Step 1: Candidate estimators}

Split the labeled sample into $\mathcal{D}_{\mathrm{tr}}$ and $\mathcal{D}_{\mathrm{val}}$ with $n_{\mathrm{val}}\asymp n$.
Consider two candidates:
\[
\hat f_{\mathrm{BB}}(x) := f_0(x),
\qquad
\hat f_{\mathrm{Res}}(x) := f_0(x) + \hat r(x),
\]
where $\hat r$ is trained on $\mathcal{D}_{\mathrm{tr}}$ using the residual responses $Z_i=Y_i-f_0(X_i)$.

\subsubsection*{Step 2: Risk of the black-box candidate}

By Definition~\ref{def:Fdelta} (and Lemma~\ref{lem:risk_equiv_residual}),
for any $f^*=f_0+r^*\in\mathcal{F}(\delta)$,
\begin{equation}
\label{eq:risk_bb}
    \mathcal{R}(\hat f_{\mathrm{BB}}, f^*)
    =
    \mathbb{E}\|f_0-f^*\|_{L_2(P_X)}^2
    =
    \|r^*\|_{L_2(P_X)}^2
    \le \delta^2.
\end{equation}

\subsubsection*{Step 3: Risk of a minimax-optimal residual regressor}

It remains to upper bound the minimax risk for estimating $r^*$ in $\mathcal{H}(\beta,L)$ under random design
with sub-Gaussian noise.
A classical result (Stone's theorem and subsequent refinements) ensures the existence of estimators attaining the optimal rate.

\begin{lemma}[Existence of a minimax-optimal estimator for H\"older regression]
\label{lem:holder_upper_exist}
Under Assumption~\ref{ass:design}, for the regression model $Z_i = r(X_i)+\epsilon_i$ with $\epsilon_i\sim \mathcal{N}(0,\sigma^2)$,
there exists an estimator $\hat r$ (measurable w.r.t.\ $\mathcal{D}_{\mathrm{tr}}$) such that
\begin{equation}
\label{eq:holder_upper_exist}
    \sup_{r\in\mathcal{H}(\beta,L):\ \|r\|_\infty\le B}\ 
    \mathbb{E}\big[\|\hat r-r\|_{L_2(P_X)}^2\big]
    \le
    C_{up}\, n_{tr}^{-\frac{2\beta}{2\beta+d}},
\end{equation}
where $C_{up}$ depends only on $(\beta,L,d,\sigma,c,C,B)$.
\end{lemma}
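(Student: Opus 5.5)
We prove Lemma~\ref{lem:holder_upper_exist} with an explicit, classical estimator: a truncated local polynomial estimator of degree $s=\lfloor\beta\rfloor$. Partition $[0,1]^d$ into $M=m^d$ cubes $\{R_j\}$ of side $1/m$ with $m\asymp n_{tr}^{1/(2\beta+d)}$. On each cube, fit a polynomial of degree $s$ to the residual responses $Z_i$ by least squares using only the points falling in that cube. Define $\hat r$ to be $0$ on cubes whose empirical design Gram matrix is badly conditioned, and clip the result to $[-B,B]$. Clipping only reduces pointwise error because $\|r\|_\infty\le B$.

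The risk decomposes cube by cube. Lemma~\ref{lem:norm_equiv} lets us work in Lebesgue $L_2$ at the cost of the constants $(c,C)$. We then bound the cube contributions as follows.

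\textbf{Bias.} By a Taylor expansion, the H\"older condition gives a degree-$s$ polynomial $\pi_j$ with $\sup_{R_j}|r-\pi_j|\le C L m^{-\beta}$. Summed over cubes, this contributes at most $CL^2m^{-2\beta}$.

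\textbf{Variance.} Let $N_j$ be the number of training points in $R_j$. Conditional on the design, and on the event that the Gram matrix of the rescaled monomials on $R_j$ has smallest eigenvalue at least $\lambda_0>0$, least squares has error at most $C\sigma^2\dim(\mathcal{P}_s)/N_j$ in the normalized sup-on-cube sense. The same event also ensures the bias of the fit is at most a constant times $\sup_{R_j}|r-\pi_j|$. Weighting by cube volume $m^{-d}$ and using $N_j\approx n_{tr}p_j\gtrsim c\,n_{tr}m^{-d}$, the variance totals $C\sigma^2 m^d/n_{tr}$.

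\textbf{Bad event.} On cubes where the eigenvalue condition fails, $\hat r=0$, so the error is at most $B^2 m^{-d}$ per cube. We then need the probability of failure.

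Balancing $m^{-2\beta}$ against $m^d/n_{tr}$ gives the rate $n_{tr}^{-2\beta/(2\beta+d)}$.

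The main obstacle is controlling the random design, namely the Gram-matrix event. We would handle it as follows. Under Assumption~\ref{ass:design}, the rescaled population Gram matrix on each cube has eigenvalues bounded below by a constant depending on $(c,C,s,d)$. Matrix Chernoff (or Bernstein) applied to the $N_j$ i.i.d.\ points then shows the empirical Gram matrix stays above half that value with probability at least $1-C\exp(-c'N_j)$. A binomial Chernoff bound gives $N_j\ge \tfrac12 c\,n_{tr}m^{-d}\asymp n_{tr}^{2\beta/(2\beta+d)}$ with exponentially small failure probability. Hence the total bad-event contribution $B^2\sum_j m^{-d}\mathbb{P}(\text{fail}_j)$ is $O(\exp(-c''n_{tr}^{2\beta/(2\beta+d)}))$, which is negligible.

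For small $n_{tr}$, where $m$ would drop below $1$, we set $m=1$. In that case the bound is trivial because the risk is at most $4B^2$, and this is absorbed into $C_{up}$. Collecting terms gives \eqref{eq:holder_upper_exist}, with $C_{up}$ depending only on $(\beta,L,d,\sigma,c,C,B)$.

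As an alternative, if one prefers to cite rather than construct, the lemma follows directly from Stone's optimal-rate theorem or from \citet{tsybakov2009nonparametric}'s local polynomial analysis. The argument above is that analysis with the random-design conditioning made explicit.
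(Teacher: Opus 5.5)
Your argument is correct, and it is more self-contained than the paper's proof. The paper's proof of Lemma~\ref{lem:holder_upper_exist} is only a sketch: it names a local polynomial (or series/wavelet) estimator of degree $\lfloor\beta\rfloor$, cites \citet{stone1982optimal} and \citet{tsybakov2009nonparametric} for the rate, and attributes the $(c,C)$-dependence to Lemma~\ref{lem:norm_equiv}. You instead fix a concrete partitioning (piecewise-polynomial least-squares) estimator and carry out the bias--variance analysis yourself. You control the random design with a binomial Chernoff bound on the cell counts $N_j$ and a matrix Chernoff bound on the rescaled cell Gram matrices, and you clip at $B$ so that bad cells cost only a bounded amount. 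The paper's route buys brevity. Yours supplies what the citations do not give verbatim in the stated form. Stone's theorem is phrased as an achievable rate in probability, and it is your clipping together with exponentially small bad-event probabilities that yields a bound on expected squared $L_2(P_X)$ risk. It also makes the constant visibly depend only on $(\beta,L,d,\sigma,c,C,B)$. (Incidentally, the cited Chapter~2 of \citet{tsybakov2009nonparametric} is its lower-bound chapter.) Two harmless bookkeeping points remain. On cells where the Gram condition holds but $N_j$ falls below the Chernoff threshold, you have $\hat r\neq 0$, so the per-cell error there should be bounded by $4B^2m^{-d}$ via clipping rather than $B^2m^{-d}$. And since $n_{tr}\ge 1$, the choice $m=\lfloor n_{tr}^{1/(2\beta+d)}\rfloor$ is always at least $1$, so the small-$n_{tr}$ special case is unnecessary.
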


\begin{proof}[Proof sketch with precise dependency]
This is a standard nonparametric regression upper bound for H\"older smoothness under random design with density bounded
away from $0$ and $\infty$.
One may take, for example, a local polynomial estimator of degree $\lfloor\beta\rfloor$ or an appropriately chosen series/wavelet estimator,
which attains the minimax rate $n^{-2\beta/(2\beta+d)}$ in $L_2(P_X)$ risk; see, e.g.,
\citep[Chapter~2]{tsybakov2009nonparametric} and the classical optimality result of \cite{stone1982optimal}.
The constants depend on $(\beta,L,d,\sigma)$ and on the density bounds $(c,C)$ through norm equivalences (Lemma~\ref{lem:norm_equiv}).
\end{proof}

Applying Lemma~\ref{lem:holder_upper_exist} to $r^*$ yields
\begin{equation}
\label{eq:risk_res}
    \sup_{f^*\in\mathcal{F}(\delta)} \mathcal{R}(\hat f_{\mathrm{Res}}, f^*)
    =
    \sup_{r^*}\mathbb{E}\|\hat r-r^*\|_{L_2(P_X)}^2
    \le
    C_{up}\, n_{\mathrm{tr}}^{-\frac{2\beta}{2\beta+d}}.
\end{equation}

\subsubsection*{Step 4: Validation selection and conclusion}

Define $\hat f_{\mathrm{safe}}$ as the validation-selected predictor between $\hat f_{\mathrm{BB}}$ and $\hat f_{\mathrm{Res}}$,
as in Algorithm~\ref{alg:bb_residual}.
By Lemma~\ref{lem:val_oracle_two} (applied conditionally on $\mathcal{D}_{\mathrm{tr}}$),
with probability at least $1-\delta'$ over $\mathcal{D}_{\mathrm{val}}$,
\[
L(\hat f_{\mathrm{safe}})
\le
\min\{L(\hat f_{\mathrm{BB}}), L(\hat f_{\mathrm{Res}})\}
+
C\frac{\log(2/\delta')}{n_{\mathrm{val}}},
\]
for $C$ depending only on $(B_0,B,\sigma)$ (via boundedness and sub-Gaussianity).
Taking expectations and using \eqref{eq:risk_bb} and \eqref{eq:risk_res} gives
\[
\sup_{f^*\in\mathcal{F}(\delta)}
\mathcal{R}(\hat f_{\mathrm{safe}}, f^*)
\le
C'\min\Big\{\delta^2,\ n_{\mathrm{tr}}^{-\frac{2\beta}{2\beta+d}}\Big\}
+
C''\frac{1}{n_{\mathrm{val}}},
\]
where we fixed $\delta'$ as a constant (e.g., $\delta'=1/4$) to absorb $\log(2/\delta')$ into $C''$,
and used $n_{\mathrm{tr}}\asymp n$ and $n_{\mathrm{val}}\asymp n$.
This yields the claimed bound in Theorem~\ref{thm:upper_bound} (up to constant-factor changes).
\qed

\subsection{Proof of Corollary~\ref{cor:alg1_optimality}}
\label{app:proofs:corollary1}

The proof follows directly from the risk decomposition and the sample splitting construction.
Algorithm~\ref{alg:bb_residual} constructs two candidates: $\hat{f}_{\mathrm{BB}} = f_0$ and $\hat{f}_{\mathrm{Res}} = f_0 + \hat{r}$, where $\hat{r}$ is trained on $\mathcal{D}_{\mathrm{tr}}$ with sample size $n_{\mathrm{tr}} = (1-\rho)n$.

From Theorem~\ref{thm:upper_bound} (specifically the oracle inequality argument in its proof), the risk of the validation-selected estimator $\hat{f}_{\mathrm{safe}}$ satisfies:
\begin{equation}
\sup_{f^* \in \mathcal{F}(\delta)} \mathbb{E}\|\hat{f}_{\mathrm{safe}} - f^*\|_{L_2(P_X)}^2 
\leq C \cdot \min\left\{\delta^2, n_{\mathrm{tr}}^{-\frac{2\beta}{2\beta+d}}\right\} + \frac{C'}{n_{\mathrm{val}}}.
\end{equation}

Substituting $n_{\mathrm{tr}} = (1-\rho)n$ into the estimation error term:
\begin{equation}
n_{\mathrm{tr}}^{-\frac{2\beta}{2\beta+d}} = ((1-\rho)n)^{-\frac{2\beta}{2\beta+d}} = (1-\rho)^{-\frac{2\beta}{2\beta+d}} \cdot n^{-\frac{2\beta}{2\beta+d}}.
\end{equation}

The additive selection cost term is $C'/n_{\mathrm{val}} = C'/(\rho n) = O(n^{-1})$. Since the nonparametric rate $n^{-\frac{2\beta}{2\beta+d}}$ is asymptotically slower than $n^{-1}$ (for any finite $d \ge 1$), the $O(n^{-1})$ term is lower order, although it remains an explicit finite-sample validation-selection cost.

Thus, we obtain:
\begin{equation}
\sup_{f^* \in \mathcal{F}(\delta)} \mathbb{E}\|\hat{f}_{\mathrm{safe}} - f^*\|^2 \leq (1-\rho)^{-\frac{2\beta}{2\beta+d}} \cdot C \min\left\{\delta^2, n^{-\frac{2\beta}{2\beta+d}}\right\} + \tilde{O}(n^{-1}).
\end{equation}

For the specific case of $\rho = 0.2$ and $\beta \approx d$ (e.g., $d=1, \beta=1$), the exponent is $-2/3$. The inflation factor is $(0.8)^{-2/3} \approx 1.16$. Generally, for $\beta, d > 0$, this factor is a constant independent of $n$. This confirms that Algorithm~\ref{alg:bb_residual} is minimax-optimal up to the validation-selection cost and a constant factor from sample splitting.
\qed

\subsection{Proof of Theorem~\ref{thm:oracle} (Oracle Inequality for Safe Selection)}
\label{app:proofs:oracle}

Theorem~\ref{thm:oracle} is a direct application of Lemma~\ref{lem:val_oracle_two} with
$\hat f_1=\hat f_{\mathrm{BB}}=f_0$ and $\hat f_2=\hat f_{\mathrm{Res}}=f_0+\hat r$.
Indeed, conditional on $\mathcal{D}_{\mathrm{tr}}$, both candidates are fixed functions,
and the validation set is independent.
The boundedness assumptions needed for Lemma~\ref{lem:val_oracle_two} are satisfied by
$\|f_0\|_\infty\le B_0$ and by clipping $f_0+\hat r$ into $[-(B_0+B),B_0+B]$ if necessary (Remark following Lemma~\ref{lem:val_oracle_two}).

Therefore, for any $\delta'\in(0,1)$, with probability at least $1-\delta'$,
\[
\mathcal{R}(\hat f_{\mathrm{safe}})
\le
(1+\gamma)\min\{\mathcal{R}(\hat f_{\mathrm{BB}}),\mathcal{R}(\hat f_{\mathrm{Res}})\}
+
C\frac{\log(2/\delta')}{n_{\mathrm{val}}},
\]
for constants $(\gamma,C)$ depending only on boundedness and the sub-Gaussian noise parameter.
This is exactly the statement of Theorem~\ref{thm:oracle} (up to renaming constants).
\qed

\subsection{The Price of Safety: Sample Splitting Cost}
\label{app:proofs:splitting}

Here we justify the inflation factor discussed in Section~\ref{sec:splitting_cost}.
In the sample-dominated regime, the residual estimation error is of order $n_{\mathrm{tr}}^{-2\beta/(2\beta+d)}$.
With $n_{\mathrm{tr}}=(1-\rho)n$, we have
\[
n_{\mathrm{tr}}^{-\frac{2\beta}{2\beta+d}}
=
\big((1-\rho)n\big)^{-\frac{2\beta}{2\beta+d}}
=
(1-\rho)^{-\frac{2\beta}{2\beta+d}}\, n^{-\frac{2\beta}{2\beta+d}}.
\]
Thus the sole effect of sample splitting on the main nonparametric term is a multiplicative constant
$(1-\rho)^{-2\beta/(2\beta+d)}$.
In addition, the oracle inequality contributes an additive selection penalty of order $O(1/n_{\mathrm{val}})=O(1/(\rho n))$,
which is asymptotically negligible compared to $n^{-2\beta/(2\beta+d)}$ whenever $2\beta/(2\beta+d)<1$
(i.e., for any finite $d$ and $\beta>0$).

\section{Theoretical Novelty and Comparison to Classical Localization}
\label{app:theory_novelty}

\subsection{Comparison to Classical Localized Minimax Analysis}
Standard non-parametric regression typically considers the minimax risk over a global H\"older ball $\mathcal{H}(\beta, L)$. The classical minimax rate is known to be $n^{-\frac{2\beta}{2\beta+d}}$ \cite{stone1982optimal, tsybakov2009nonparametric}.

Our work introduces a novel twist by considering the intersection of the H\"older ball with an $L_2$-neighborhood of a fixed predictor $f_0$:
\[
\mathcal{F}(\delta) = \mathcal{H}(\beta, L) \cap \{f : \|f - f_0\|_{L_2} \le \delta\}.
\]
This formulation fundamentally alters the metric entropy of the hypothesis class.
\begin{itemize}
    \item \textbf{Classical Regime:} When $\delta$ is large (specifically $\delta > n^{-\frac{\beta}{2\beta+d}}$), the $L_2$ constraint is inactive. The covering number $\log N(\epsilon, \mathcal{F}(\delta))$ behaves like $\epsilon^{-d/\beta}$, recovering the standard rate.
    \item \textbf{Black-Box Regime:} When $\delta$ is small, the geometry of $\mathcal{F}(\delta)$ is dominated by the $L_2$ ball. The local entropy integral is truncated, leading to a parametric-like rate limited by $\delta^2$.
\end{itemize}
While localized analysis (e.g., local Rademacher complexity) deals with adaptivity to the \emph{target function's} local smoothness, our analysis characterizes adaptivity to the \emph{prior's} quality. The phase transition at $\delta_c(n)$ is the precise mathematical boundary where the "information content" of the samples $n$ supersedes the "information content" of the prior constraint $\delta$.

\section{Implementation Details for Reproducibility}
\label{app:implementation}
\label{app:impl}

To ensure full reproducibility, we provide the specific hyperparameters and architectures used in our experiments.

\textbf{1. Synthetic Experiments (KRR)}
\begin{itemize}
    \item \textbf{Model:} Kernel Ridge Regression with RBF kernel $k(x, x') = \exp(-\gamma \|x-x'\|^2)$.
    \item \textbf{Hyperparameters:}
    \begin{itemize}
        \item Regularization $\alpha$: Grid search over $\{10^{-4}, 10^{-3}, 10^{-2}, 10^{-1}\}$.
        \item Kernel Gamma $\gamma$: Grid search over $\{1, 5, 10, 20\}$.
    \end{itemize}
    \item \textbf{Selection:} 5-fold cross-validation on the training split.
\end{itemize}

\textbf{2. Vision Experiments (ResNet/MLP)}
\begin{itemize}
    \item \textbf{Backbone:} CLIP (ViT-B/32) frozen image encoder.
    \item \textbf{Residual Head Architecture:} 
        \begin{itemize}
            \item Input: 512-dim CLIP features.
            \item Hidden: Linear(512, 512) $\to$ ReLU $\to$ Dropout(0.3).
            \item Output: Linear(512, NumClasses).
        \end{itemize}
    \item \textbf{Optimization:} AdamW optimizer, Learning rate $10^{-3}$, Weight decay $10^{-4}$.
    \item \textbf{Training:} Batch size 32 (for $n<1000$) or 64. Early stopping with patience=10 on validation accuracy.
    \item \textbf{Zero-Initialization:} The weights and biases of the final output layer are explicitly set to 0.0 at initialization.
\end{itemize}

\textbf{3. NLP Experiments (Qwen)}
\begin{itemize}
    \item \textbf{Backbone:} Qwen3-8B (frozen).
    \item \textbf{Feature Extraction:} Last token hidden state from the last layer.
    \item \textbf{Head Architecture:} Same MLP structure as Vision.
    \item \textbf{Prompt for $f_0$:} "Classify the article into one of: \{labels\}. Article: \{text\} Category:"
\end{itemize}

\subsection{Computational Complexity Analysis}
\label{app:complexity}

A potential concern with two-stage methods is computational overhead. We clarify that Algorithm~\ref{alg:bb_residual} incurs negligible cost compared to standard baselines:

\begin{itemize}
    \item \textbf{Training Cost:} The residual model $\hat{r}$ has the same architecture and training complexity as learning from scratch. The black-box $f_0$ is frozen and only requires a one-time forward pass to cache features/logits.
    \item \textbf{Selection Cost:} The safe selection step requires only one forward pass on the validation set to compute two scalars: $\mathcal{L}_{\mathrm{BB}}$ and $\mathcal{L}_{\mathrm{Res}}$. This is $O(n_{\mathrm{val}})$.
    \item \textbf{Comparison with Weighted Ensemble:} The \textsc{Wgt}(Val-Tuned) baseline requires a grid search over $\alpha \in [0,1]$ to minimize validation error. Our method requires no hyperparameter search for the combination mechanism, making it computationally cheaper and more stable than the ensemble baseline.
\end{itemize}

\section{Ablation Studies}
\label{app:ablation}

We conduct additional ablation studies to verify the sensitivity of our method to the validation split ratio $\rho$ and the impact of zero-initialization.

\subsection{Sensitivity to Validation Fraction \texorpdfstring{$\rho$}{rho}}
We investigate the impact of the validation split ratio $\rho = n_{\mathrm{val}} / n$ on the final performance using the 1D synthetic dataset ($n=200, \delta=0.2$). As shown in Figure~\ref{fig:ablation_rho}, the risk is stable for $\rho \in [0.15, 0.3]$. 
\begin{itemize}
    \item When $\rho$ is too small ($<0.1$), the validation set is insufficient to reliably estimate the risk, leading to noisy selection.
    \item When $\rho$ is too large ($>0.5$), the training set becomes too small to learn the residual effectively.
\end{itemize}
Our choice of $\rho=0.2$ in the main experiments strikes an optimal balance.

\subsection{Effect of Zero-Initialization}
We compare our zero-initialization strategy against standard random initialization (Kaiming Uniform) on CIFAR-100 ($n=500$). Figure~\ref{fig:ablation_zero} tracks the $L_2$ norm of the residual head's weights during training.
\begin{itemize}
    \item \textbf{Random Init:} The residual norm starts large, causing the initial predictor $\hat{f} = f_0 + \hat{r}$ to deviate significantly from the black-box $f_0$. This high initial variance increases the risk of early overfitting and negative transfer.
    \item \textbf{Zero Init (Ours):} The norm starts at 0 and grows gradually. This acts as a strong inductive bias, ensuring the model starts exactly at the black-box prior and only deviates when the data provides sufficient evidence. This leads to a lower final test error (Accuracy: \textbf{61.0\%} vs. 58.2\% for Random Init).
\end{itemize}

\begin{figure}[!htbp]
    \centering
    \begin{subfigure}[t]{0.47\linewidth}
        \centering
        \includegraphics[width=\linewidth]{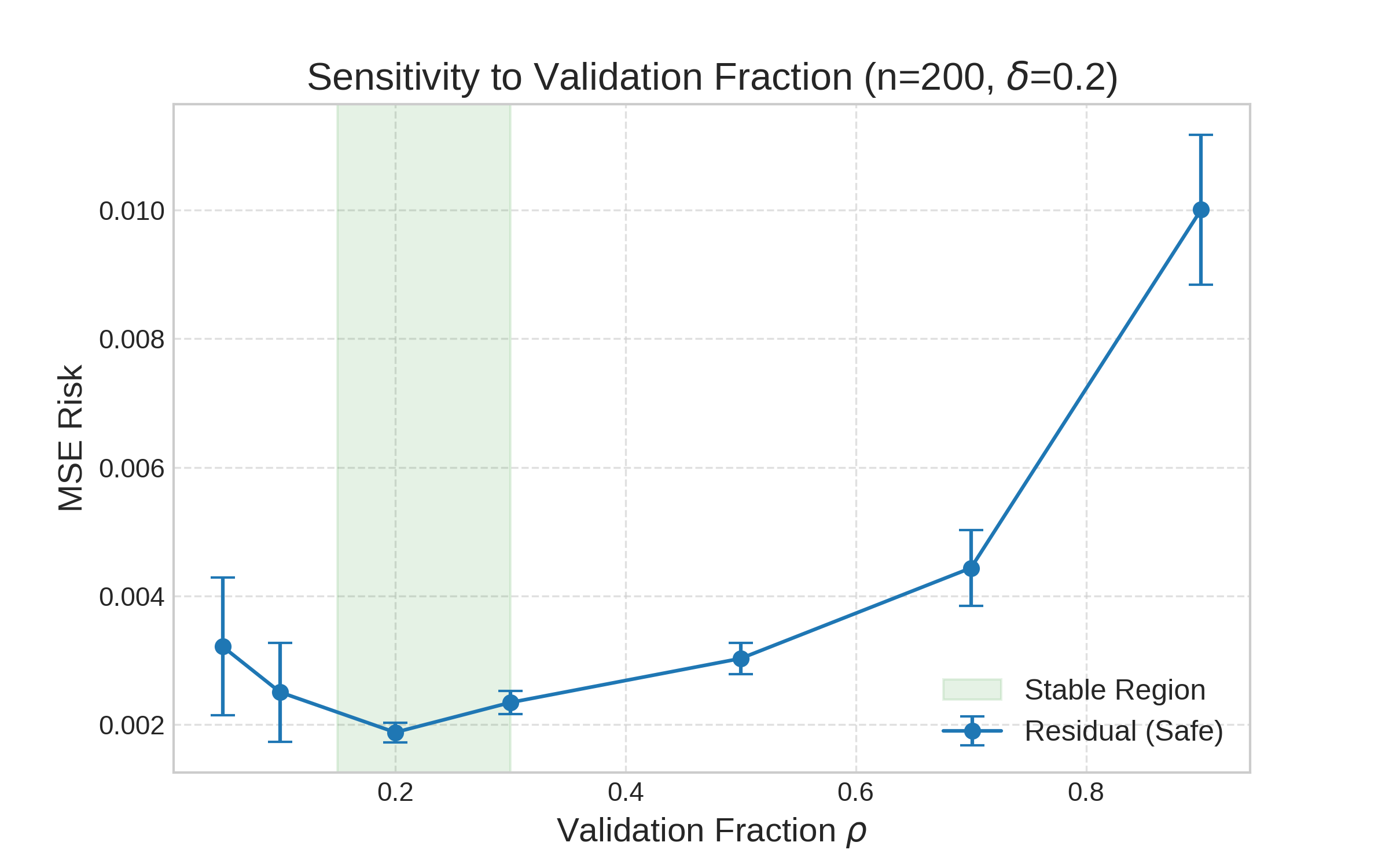}
        \caption{Validation fraction $\rho$ and MSE risk.}
        \label{fig:ablation_rho}
    \end{subfigure}
    \hfill
    \begin{subfigure}[t]{0.47\linewidth}
        \centering
        \includegraphics[width=\linewidth]{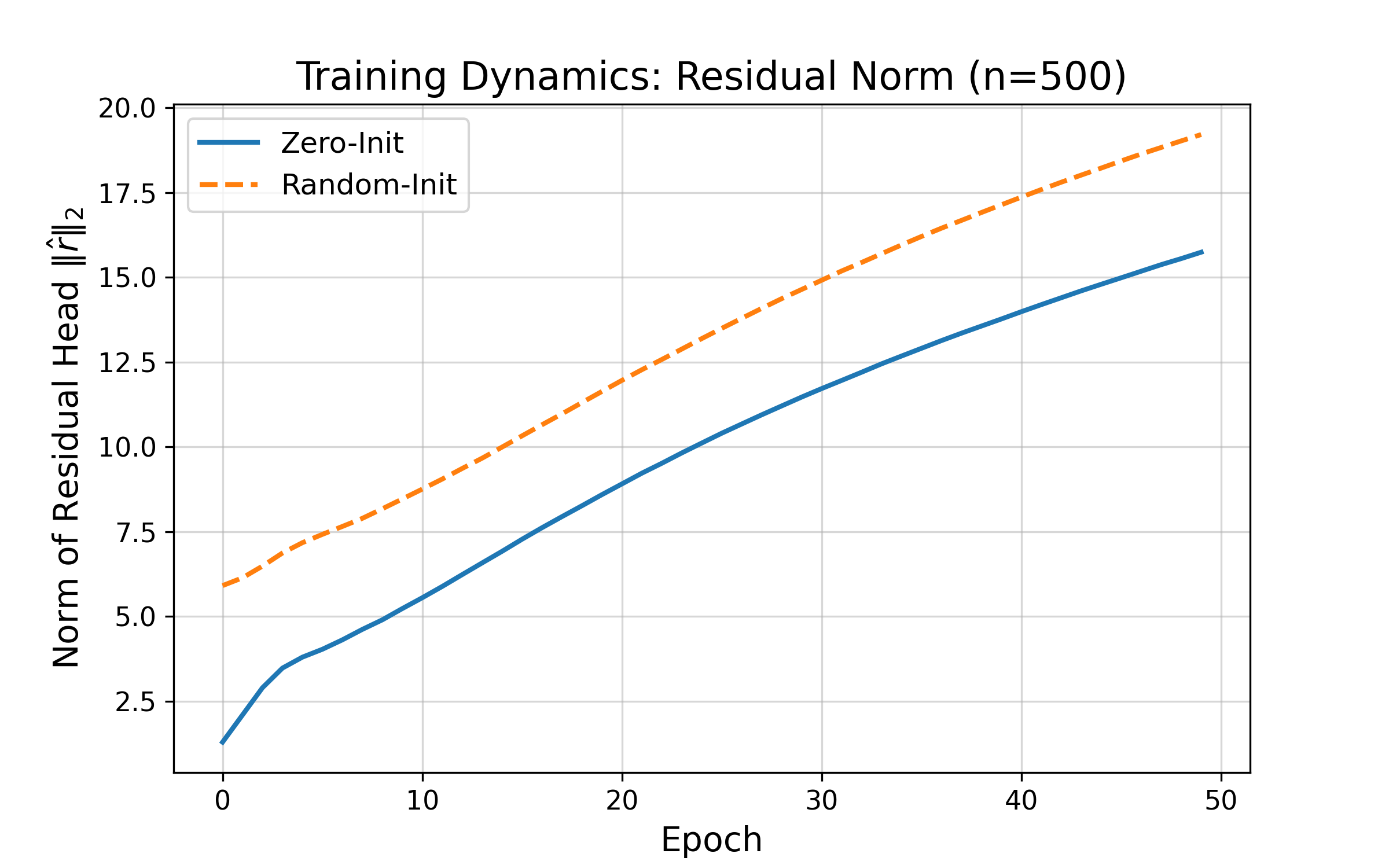}
        \caption{Residual head norm $\|\hat{r}\|_2$ during training.}
        \label{fig:ablation_zero}
    \end{subfigure}
    \caption{Ablation studies. Left: the validation split is stable for $\rho \in [0.15,0.3]$. Right: zero-initialization ensures a smooth departure from the prior $f_0$.}
\end{figure}

\end{document}